\newtheorem{lemma}{Lemma}
\newtheorem{theorem}{Theorem}
\newtheorem*{theorem*}{Theorem}
\newtheorem{proposition}{Proposition}
\newcommand{\Rmnum}[1]{\expandafter\@slowromancap\romannumeral #1@}
\newcommand{\X}{\mathcal{X}}
\newcommand{\Hyp}{\mathcal{H}}
\newcommand{\SetHyp}{\{h_1, h_2, \dots, h_n\}}
\newcommand{\real}{\mathbb{R}}
\DeclareMathOperator{\sgn}{sgn}
\newcommand{\D}{\mathcal{D}}
\DeclareMathOperator*{\argmax}{\arg\max}
\DeclareMathOperator*{\argmin}{\arg\min}
\DeclareMathOperator{\err}{err}
\DeclareMathOperator{\OPT}{OPT}
\newcommand{\lerr}{\err_L}
\newcommand{\E}{\mathbb{E}}
\newcommand{\wl}{w_l}
\newcommand{\wt}{w_o}
\newcommand{\xii}{\xi_i}
\newcommand{\NF}[1]{Z_{#1}}
\newcommand{\pl}[2]{P_{#2}(#1)}
\newcommand{\pt}[2]{P^{(t)}_{#2}(#1)}
\newcommand{\polytop}{\Pi}
\newcommand{\poly}{\mathcal{P}}
\newcommand{\Alggbs}{{\textsc{RGTP}}\xspace}
\newcommand{\prior}{P_0}
\newcommand{\BZ}{\eta}
\newcommand{\BZt}{\eta^{(t)}}
\newcommand{\filter}{\mathcal{F}}
\newcommand{\filterg}{\mathcal{G}}
\newcommand{\wph}{\delta}
\newcommand{\ppp}[1]{\delta^+_{#1}}
\newcommand{\ie}{i.e.}
\newcommand{\eg}{e.g.}
\newcommand{\algo}{{\sc STRICT}\xspace}
\newcommand{\ugtp}{{\em Submodular Teaching for cRowdsourcIng ClassificaTion}\xspace}
\icmltitlerunning{Near-Optimally Teaching the Crowd to Classify}
\begin{document}

%

\twocolumn[
\icmltitle{Near-Optimally Teaching the Crowd to Classify}

\icmlauthor{Adish Singla}{adish.singla@inf.ethz.ch}
\icmlauthor{Ilija Bogunovic}{ilija.bogunovic@epfl.ch}
\icmlauthor{Gabor Bartok}{bartok@inf.ethz.ch}
\icmlauthor{Amin Karbasi}{amin.karbasi@inf.ethz.ch}
\icmlauthor{Andreas Krause}{krausea@ethz.ch}
\icmladdress{ETH Zurich, Universit\"atstrasse 6, 8092 Z\"{u}rich, Switzerland}

\icmlkeywords{Active teaching, human learning, crowdsourcing, machine learning}

\vskip 0.3in
]
\begin{abstract}
How should we present training examples to learners to teach them classification rules?
This is a natural problem when training workers for crowdsourcing labeling tasks, and is also motivated by challenges in data-driven online education.
%
We propose a natural stochastic model of the learners, modeling them as randomly switching among hypotheses based on observed feedback. We then develop \algo, an efficient algorithm for selecting examples to teach to workers.
%
Our solution greedily maximizes a submodular surrogate objective function in order to select examples to show to the learners. We prove that our strategy is competitive with the optimal teaching policy. Moreover, for the special case of linear separators, we prove that an exponential reduction in error probability can be achieved.
Our experiments on simulated workers as well as three real image annotation tasks on Amazon Mechanical Turk show the effectiveness of our teaching algorithm.
\end{abstract}
\vspace{-7mm}
\section{Introduction}\label{sec:introduction}
\vspace{-1mm}
Crowdsourcing services, such as Amazon's Mechanical Turk platform\footnote{MTurk: \url{https://www.mturk.com/mturk/welcome}} (henceforth MTurk), are becoming vital for outsourcing information processing to large groups of workers. 
Machine learning, AI, and citizen science systems can hugely benefit from the use of these services as large-scale annotated data is often of crucial importance \cite{2008-emnlp_cheap-fast-good-annotations,2008-cvpr_utility-annotation,galaxyzoo}. Data collected from such services however is often noisy, e.g., due to spamming, inexpert or careless workers \cite{2008-cvpr_utility-annotation}. As the accuracy of the annotated data is often crucial, the problem of tackling noise from crowdsourcing services has received considerable attention. Most of the work so far has focused on methods for combining labels from many annotators \cite{2010-nips_crowd-wisdom,2011-nips_krause_crowdclustering,2013-www_aggregating-ratings}
or in designing control measures by estimating the worker's reliabilities through ``gold standard'' questions \cite{2008-emnlp_cheap-fast-good-annotations}. 

In this paper, we explore an orthogonal direction: {\em can we teach workers in crowdsourcing services in order to improve their accuracy?} That is, instead of designing models and methods for determining workers' reliability, can we develop intelligent systems that teach workers to be more effective?  While we focus on crowdsourcing in this paper, similar challenges arise in other areas of data-driven education. As running examples, in this paper we focus on crowdsourcing image labeling. In particular, we consider the task of classifying animal species, an important component in several citizen science projects such as the eBird project \cite{sullivan2009ebird}.

We start with a high-level overview of our approach.  Suppose we wish to teach the crowd to label a large set of images (e.g., distinguishing butterflies from moths). How can this be done without already having access to the labels, or a set of informative features, for all the images (in which case crowdsourcing would be useless)? We suppose we have ground truth labels only for a small ``teaching set'' of examples. Our premise is that if we can teach a worker to classify this teaching set well, she can generalize to new images. In our approach, we first elicit---on the teaching set---a set of candidate features as well as a collection of hypotheses (e.g., linear classifiers) that the crowd may be using. We will describe the concrete procedure used in our experimental setup in Section~\ref{sec:experiments_setup}.
Having access to this information 
we use a teaching algorithm to select training examples and steer the learner towards the target hypothesis.

\begin{figure*}[t]
\centering
\includegraphics[width=0.7\textwidth]{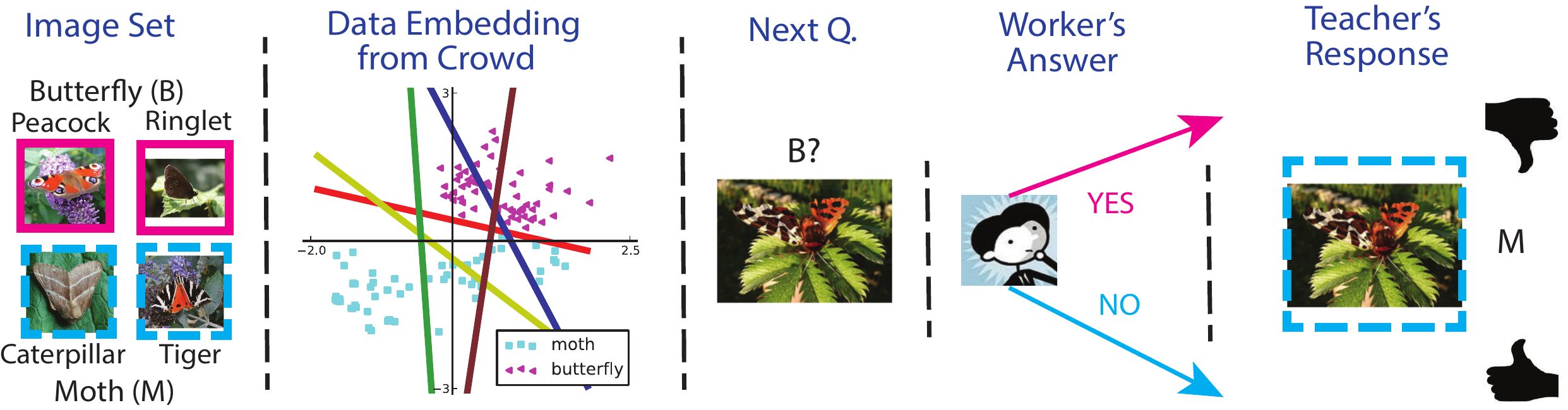}
\vspace{-4mm}
\caption{Illustration of crowd-teaching. Given a large set of images, 
the teacher (randomly) picks a small ``teaching set''. For this set,
expert labels, as well as candidate features and hypotheses used by the crowd are elicited (see Section~\ref{sec:experiments_setup}).
The teacher then uses this information to teach the rest of the crowd to label the rest of the data, for which no features or labels are available. The teacher sequentially provides an unlabeled example from the teaching set to the worker, who attempts an answer. Upon receipt of the correct label, the learner may update her hypothesis before the next example is shown.}
\vspace{-4mm}
\label{fig:hit}
\end{figure*}
Classical work on teaching classifiers (reviewed in Section~\ref{sec:related}), assumes that learners are {\em noise-free}: Hypotheses are immediately eliminated from consideration upon observation of an inconsistent training example. As we see in our experiments (Section~\ref{sec:experiments_results}), such approaches can be brittle. In contrast, we propose a {\em noise-tolerant} stochastic model of the learners, capturing our assumptions on how they incorporate training examples. We then (Section~\ref{sec:teacher}) propose \algo (\ugtp), a novel teaching algorithm that selects a sequence of training examples to the workers in order to steer them towards the true hypothesis. We theoretically analyze our approach, proving strong approximation guarantees and teaching complexity results. Lastly, we demonstrate the effectiveness of our model and \algo policy on three real image annotation tasks, carried out on the MTurk platform.

\vspace{-2mm}
\section{Background and Teaching Process} \label{sec:protocol}
We 
now describe our learning domain and teaching protocol. As a running example, we consider the task of teaching to classify images, e.g., to distinguish butterflies from moths (see Figure~\ref{fig:hit}).
\vspace{-3mm}
\subsection{The domain and the teaching protocol}
\vspace{-1mm}
Let $\X$ denote a set of examples (\eg, images), called the {\em teaching set}. We use $(x,y)$ to denote a labeled example where $x\in\X$ and $y\in\{-1,1\}$. We denote by $\Hyp$ 
a finite class of hypotheses. Each element of $\Hyp$ is a function $h:\X\mapsto\mathbb{R}$. The label assigned to $x$ by hypothesis $h$ is $\sgn(h(x))$. The magnitude $|h(x)|$ indicates the confidence hypothesis $h$ has in the label of $x$. For now, let us assume that $\X$ and $\Hyp$ are known to both the teacher and the learner.
In our image classification example, each image may be given by a set of features $x$, and each hypothesis $h(x)=w_h^Tx$ could be a linear function.
In Section~\ref{sec:experiments_setup}, we discuss the concrete hypothesis spaces used in our crowdsourcing tasks, and how we can elicit them from the crowd.

The teacher has access to the labels $y(x)$ of all the examples $x$ in $\X$. We consider the {\em realizable} setting where $\Hyp$ contains a hypothesis $h^*$ (known to the teacher, but not the learner)
for which $\sgn(h^*(x))=y(x)$ for all $x\in\X$.
The goal of the teacher is to teach the correct hypothesis $h^*$ to the learner.
The basic assumption behind our approach is that if we can teach the workers to classify $\X$ correctly, then they will be able to generalize to new examples drawn from the same distribution as $\X$ (for which we neither have ground truth labels nor features).
We will verify this assumption experimentally in Section~\ref{sec:experiments_results}.
In the following, we review existing approaches to teaching classifiers, and then present our novel teaching method.

%
\vspace{-3mm}
\subsection{Existing teaching models}\label{sec:related}
\vspace{-1mm}
In existing methods, a broad separation can be made about assumptions that learners use to process training examples. {\em Noise-free} models assume learners immediately discard hypotheses  inconsistent with observed examples.  As our experiments in Section~\ref{sec:experiments_results} show, such models can be  brittle in practice. In contrast, {\em noise-tolerant} models make less strict assumptions on how workers treat inconsistent hypotheses. 

\vspace{-4mm}
\paragraph{Noise-free teaching:}
In their seminal work, \citet{1992-jcss_kearns_complexity-of-teaching} consider the {\em non-interactive} model: The teacher reveals a sequence of labeled examples, and the learner discards any inconsistent hypotheses (i.e., for which $h(x)\neq y$ for any example $(x,y)$ shown). For a given hypothesis class, the {\em Teaching Dimension} is the smallest number of examples required to ensure that all inconsistent hypotheses are eliminated. 
More recent work \cite{2009-lata_bulach_developments-in-teaching,2010-jmlr_models-cooperative-teaching-learning, 2010-alt_recursive-teaching-dimension, DuL11} consider models of {\em interactive} teaching, where the teacher, after showing each example, obtains feedback about the hypothesis that the learner is currently implementing. Such feedback can be used to select future teaching examples in a more informed way.  While theoretically intriguing, in this paper we focus on {\em non-interactive} models, which are typically easier to deploy in practice.

\vspace{-4mm}
\paragraph{Noise-tolerant teaching:}
In contrast to the noise-free setting, the practically extremely important {\em noise-tolerant} setting is theoretically much less understood. Very recently, \citet{zhu13} investigates the optimization problem of generating a set of teaching examples that trades off between the expected future error of the learner and the ``effort'' (\ie, number of examples) taken by the teacher, in the special case when the prior of the learner falls into the exponential family, and the learner performs Bayesian inference. Their algorithmic approach does not apply to the problem addressed in this paper. Further, the approach is based on heuristically rounding the solution of a convex program, with no bounds on the integrality gap.

\looseness -1 \citet{2013-aaai_teaching} study a similar problem of teaching workers to classify images. The authors empirically investigate a variety of heuristic teaching policies on a set of human subjects for a synthetically generated data set. \Citet{lindseyetal13} propose a method for evaluating and optimizing over parametrized policies with different orderings of positive and negative examples. 
None of these approaches offer theoretical performance guarantees of the kind provided in this paper.
\vspace{-2mm}
\section{Model of the Learner}\label{sec:learner}
\vspace{-1mm}
We now introduce our model of the learner,
by 
formalizing our assumptions about how she adapts her hypothesis based on the training examples she receives from the teacher. Generally, we assume that the learner is not aware that she is being taught.
We assume that she carries out a random walk in the hypothesis space $\Hyp$: She starts at some hypothesis, stays there as long as the training examples received are consistent with it, and randomly jumps to an alternative hypothesis upon an observed inconsistency. Hereby, preference will be given to hypotheses that better agree with the received training.
%

More formally, we model the learner via a stochastic process, in particular a (non-stationary) Markov chain. Before the first example, the learner randomly chooses a hypothesis $h_1$, drawn from a prior distribution $P_0$. Then, in every round $t$ there are two possibilities:
 If the example $(x_t,y_t)$ received agrees with the label implied by the learner's current hypothesis (i.e., $\sgn(h_t(x_t))=y_t$), she sticks to it: $h_{t+1}=h_t$. On the other hand, if the label $y_t$ disagrees with the learner's prediction $\sgn(h_t(x_t))$, she draws a new hypothesis $h_{t+1}$ based on a distribution $P_t$ constructed in a way that reduces the probability of hypotheses that disagreed with the true labels in the previous steps:
\begin{align}
  P_{t}(h)=\frac{1}{Z_t}P_0(h)\prod_{\substack{s=1\\y_s\neq\sgn(h(x_s))}}^t P(y_s|h,x_s) \label{eq:jumpprob}
%
%
\end{align}
with normalization factor $$Z_t=\sum_{h\in\Hyp}P_0(h)\prod_{\substack{s=1\\y_s\neq\sgn(h(x_s))}}^tP(y_s|h,x_s).
$$

In Equation~\eqref{eq:jumpprob}, for some $\alpha>0$, the term $$P(y_s|h,x_s)=\frac{1}{1+\exp(-\alpha h(x_s) y_s)}$$
models a likelihood function, encoding the confidence that hypothesis $h$ places in example $x_s$. Thus, if the example $(x_s,y_s)$ is ``strongly inconsistent'' with $h$ (i.e., $h(x_s)y_s$ takes a large negative value and consequently $P(y_s\mid h,x_s)$ is very small), then the learner will be very unlikely to jump to hypothesis $h$.
The scaling parameter $\alpha$ allows to control the effect of observing inconsistent examples. The limit $\alpha\to\infty$ results in a behavior where inconsistent hypotheses are completely removed from consideration. This case precisely coincides with the {\em noise-free} learner models classically considered in the literature \cite{1992-jcss_kearns_complexity-of-teaching}.

It can be shown (see Lemma~$1$ in the supplementary material), that the marginal probability that the learner implements some hypothesis $h$ in step $t$ is equal to $P_t(h)$, even when the true label and the predicted label agreed in the previous step.

\if0
{\bf Learner's ``guess'':} If the learner's current hypothesis is $h_t$ and the example shown is $x_t$, the likelihood that the learner outputs the label $l_t=+1$ is given w.l.o.g.~by the following logistic model:
\vspace{-1mm}
\begin{align*}
  P(l_t=+1\mid x_t,h_t)&=\frac{1}{1 + e^{-\alpha\cdot h_t(x_t)}}\,.
\end{align*}
\vspace{-4mm}


Thus, the absolute value of the hypothesis $|h_t(x_t)|$ is the confidence of the hypothesis in the answer.
The above expression implies that $\alpha=0$ results in a uniform random guesser, while the limit $\alpha=\infty$ corresponds to a deterministic learner whose label $l_t$ is always the same as implied by the current hypothesis, i.e., $l_t=\sgn(h_t(x_t))$. \footnote{Note that the logistic model is w.l.o.g.~as for alternate likelihood functions one can simply rescale all hypotheses $h\in\Hyp$.} 

{\bf Transition model:}
First, we wish to capture the fact that learners are unlikely to dramatically change their hypotheses, and prefer gradual change. 
We consider a learner who probabilistically ``jumps'' between hypotheses according to their distance,
in case the learner made an incorrect guess. More precisely, if the learner's guessed label $l_t$
agrees with the provided example $(x_t,y_t)$, i.e., $l_t=y_t$, the learner does not change his hypothesis:  $P(h_{t+1}\mid h_t,x_{1:t},y_{1:t},l_t)=1$ for $h_{t+1}  = h_{t}$ if $l_t=y_t$.
If $l_t\neq y_t$, however, he chooses to abandon the current hypothesis $h_t$ and jumps to another hypothesis $h_{t+1}$, with a probability depending on the observed history and the distance between $h_t$ and $h_{t+1}$.
Concretely, we model the transition probability w.l.o.g.~to be proportional to 
\vspace{-1mm}
$$T(h_{t+1}\mid h_t)=\frac{1}{1 + e^{\sfrac{\mathcal{\D}(h_{t+1},h_t)}{\beta}}}.$$
\vspace{-1mm}
For finite $\beta>0$, the learner more likely jumps to hypotheses that are close (according to $\mathcal{\D}$) than far. The magnitude controls the  distance traveled. Here again, the extreme values of $\beta$ represent extreme behaviors: if $\beta=0$, then the learner is not willing to move from the current hypothesis, while  $\beta=\infty$ corresponds to jumping to an arbitrary hypothesis independent of the distance.

Secondly, we assume that the probability of jumping to any particular hypothesis $h'$ also depends on the ``evidence'' that the learner has accrued about the correctness of $h'$ so far.
Specifically, after every round, the learner calculates a probability distribution over hypotheses, and the transition probability is also proportional to this distribution.
Formally, the learner computes the belief $Q_t(h')$ about whether a hypothesis $h'$ is correct, based on the history of labeled examples: 
\vspace{-4mm}
\begin{align*}
Q_t(h'\mid x_{1:t},y_{1:t}) & \propto \prod_{\tau=1}^{t}P(y_\tau\mid x_\tau,h') \\
                            & =\prod_{\tau=1}^t \frac{1}{1 + e^{-\alpha\cdot y_\tau\cdot h'(x_\tau)}}\,.
\end{align*}
Combining both ``distance'' and ``correctness'' factors, the final 
distribution $P$ from which the learner samples when $l_t\neq y_t$ is:

\vspace{-4mm}
\begin{align*}
P(h_{t+1}\mid h_{t},x_{1:t}, y_{1:t},l_t) \propto T(h_{t+1}\mid h_{t}) Q_t(h_{t+1}|x_{1:t},y_{1:t}) & \\
                             =\frac{1}{1 + e^{\sfrac{\mathcal{\D}(h_{t+1},h_{t})}{\beta}}}\prod_{\tau=1}^{t}P(y_\tau\mid x_\tau,h) &.
\end{align*}
\vspace{-2mm}
\fi

\vspace{-3mm}
\section{Teaching Algorithm}\label{sec:teacher}
\vspace{-1mm}
Given the learner's prior over the hypotheses $P_0(h)$, how should the teacher choose examples to help the learner narrow down her belief to accurate hypotheses?  By carefully showing examples, the teacher can control the learner's progress by steering her posterior towards $h^*$.

With a slight abuse of notation, if the teacher showed the set of examples $A=\{x_1,\dots,x_t\}$ we denote the posterior distribution by $P_t(\cdot)$ and $P(\cdot|A)$ interchangeably. We use the latter notation when we want to emphasize that the examples shown are the elements of~$A$. With the new notation, we can write the learner's posterior after showing $A$ as
\begin{align*}
  P(h|A) &= \frac{1}{Z(A)}P_0(h)\prod_{\substack{x\in A\\y(x)\neq\sgn(h(x))}}P(y(x)|h,x)\,.
\end{align*}
The ultimate goal of the teacher is to steer the learner towards a distribution with which she makes few mistakes. The expected error-rate of the learner after seeing examples $A=\{x_1,\ldots,x_t\}$ together with their labels $y_i=\sgn(h^*(x_i))$ can be expressed as
\begin{align*}
    \mathbb{E}[\lerr\mid A] &=\sum_{h\in \Hyp} P(h|A)\err(h,h^*)\,, \text{where}\\
    \err(h,h^*)&=\frac{|\{x\in \X:\sgn(h(x))\neq \sgn(h^*(x))\}|}{|\X|}
\end{align*}
is the fraction of examples $x$ from the teaching set $\X$ on which $h$ and $h^*$ disagree about the label. We use the notation $\mathbb{E}[\lerr]=\mathbb{E}[\lerr\mid \{\}]$ as shorthand to refer to the learner's error before receiving training.

Given an allowed tolerance $\epsilon$ for the learner's error, a natural objective for the teacher is to find the smallest set of examples $A^*$ achieving this error, i.e.:
\begin{align}A_{\varepsilon}^*=\arg\min_{A\subseteq \X} |A|\text{ s.t. }\mathbb{E}[\lerr\mid A] \leq \epsilon\label{eq:probstat}.\end{align}
We will use the notation $\OPT(\epsilon)=|A^*_\epsilon|$ to refer to the size of the optimal solution achieving error $\epsilon$. Unfortunately, Problem~\eqref{eq:probstat} is a difficult combinatorial optimization problem. The following proposition, proved in the supplement, establishes hardness via a reduction from set cover.
\vspace{1mm}
\begin{proposition}Problem~\eqref{eq:probstat} is NP-hard.\label{prop:hardness}\end{proposition}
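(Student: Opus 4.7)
The plan is to reduce the (NP-hard) \textsc{Minimum Set Cover} problem to Problem~\eqref{eq:probstat}. Given a set cover instance with universe $U=\{u_1,\dots,u_n\}$ and subsets $S_1,\dots,S_m\subseteq U$ (with each $u_j$ contained in at least one $S_i$, which is WLOG), I would construct a teaching instance with teaching set $\X=\{x_1,\dots,x_m\}$ (one example per subset) and hypothesis class $\Hyp=\{h^*,h_1,\dots,h_n\}$ (one ``bad'' hypothesis per universe element, together with the target). Take $P_0$ uniform over $\Hyp$, set $h^*(x_i)=+M$ with label $y(x_i)=+1$ for every $i$, and for each $j$ set $h_j(x_i)=-M$ whenever $u_j\in S_i$ and $h_j(x_i)=+M$ otherwise, where $M$ is a constant chosen below. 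This mirrors the covering structure: showing example $x_i$ strongly contradicts exactly those $h_j$ with $u_j\in S_i$.

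Writing $\eta:=1/(1+\exp(\alpha M))$, the likelihood from Equation~\eqref{eq:jumpprob} is $P(+1\mid h_j,x_i)=\eta$ when $u_j\in S_i$ and $1$ otherwise. The analysis then splits into two cases.

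\textbf{Cover case.} If $A\subseteq\X$ corresponds to a valid set cover, then for every $j$ at least one factor of $\eta$ enters the product defining $P(h_j\mid A)$. Since $h^*$ is always consistent, $P(h^*\mid A)\propto 1/(n+1)$, hence $Z(A)\ge 1/(n+1)$ and $P(h_j\mid A)\le \eta$ for each $j$. Using $\err(h_j,h^*)\le 1$, this gives $\mathbb{E}[\lerr\mid A]\le n\eta$. \textbf{Non-cover case.} If $A$ fails to cover some $u_j$, then no shown example contributes a shrinking factor to $h_j$, so $P(h_j\mid A)\ge P_0(h_j)=1/(n+1)$ (using $Z(A)\le 1$). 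Because $u_j\in S_i$ for some $i$, hypothesis $h_j$ disagrees with $h^*$ on at least one element of $\X$, so $\err(h_j,h^*)\ge 1/m$, and consequently $\mathbb{E}[\lerr\mid A]\ge 1/\bigl((n+1)m\bigr)$.

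Choose $M=\Theta(\log(nm)/\alpha)$ large enough that $n\eta<1/\bigl(2(n+1)m\bigr)$, and set $\epsilon := 1/\bigl(2(n+1)m\bigr)$. Then covers satisfy the constraint $\mathbb{E}[\lerr\mid A]\le\epsilon$ while non-covers violate it, so $\OPT(\epsilon)$ equals the minimum set cover size. The construction uses $O(nm)$ hypothesis values, and $M$ has $O(\log(nm))$ bits, so the reduction is polynomial. NP-hardness of \textsc{Minimum Set Cover} then transfers to Problem~\eqref{eq:probstat}.

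The only delicate step is the parameter tuning: one must verify that the gap between the ``cover'' and ``non-cover'' regimes survives once the normalizer $Z(A)$ is bounded from both sides, and that the required $M$ grows only logarithmically in $n,m$ so that the reduction remains polynomial in the input size for fixed $\alpha$. The rest is direct bookkeeping with the posterior formula.
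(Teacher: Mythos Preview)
Your reduction is correct and structurally identical to the paper's: examples correspond to subsets, non-target hypotheses correspond to universe elements, the prior is uniform, and a set of examples drives the expected error below threshold precisely when the corresponding subsets form a cover. The case analysis and the bound $\mathbb{E}[\lerr\mid A]\ge 1/((n+1)m)$ in the non-cover regime match the paper's reasoning almost verbatim.

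The one genuine difference is how the inconsistency penalty is handled. The paper simply takes $\alpha=\infty$, so that any example inconsistent with $h_j$ drives $P(h_j\mid A)$ to zero; the cover/non-cover dichotomy is then exact and no parameter tuning is needed. You instead keep $\alpha$ finite and absorb the slack into the magnitude $M$ of the hypothesis values, choosing $M=\Theta(\log(nm)/\alpha)$ so that $n\eta$ falls below the $1/((n+1)m)$ gap. This buys you hardness for every fixed finite $\alpha>0$ (the regime the learner model is actually stated in), at the cost of the extra bookkeeping you flag as ``delicate.'' Your bounds on $Z(A)$ and the resulting separation are correct, and the bit-length of $M$ is indeed logarithmic, so the reduction stays polynomial. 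In short: same reduction, slightly more general conclusion, slightly more work.
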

\vspace{-1mm}
Given this hardness, in the following, we introduce an efficient approximation algorithm for Problem~\eqref{eq:probstat}.

The first observation is that, in order to solve Problem~\eqref{eq:probstat}, we can look at the
objective function
\begin{align*}
  R(A) &= \mathbb{E}[\lerr]-\mathbb{E}[\lerr\mid A] \\
  &=\sum_{h\in \Hyp}\left(P_0(h)-P(h|A)\right)\err(h,h^*)\,,
\end{align*}
quantifying the expected \emph{reduction in error} upon teaching $A$. Solving Problem~\eqref{eq:probstat} is equivalent to finding the smallest set $A$ achieving error reduction $\mathbb{E}[\lerr]-\epsilon$. Thus, if we could, for each $k$, find a set $A$ of size $k$ maximizing $R(A)$, we could solve Problem~\eqref{eq:probstat}, contradicting the hardness.

The key idea is to replace the objective $R(A)$ with the following surrogate function:
\vspace{-2mm}
\begin{align*}
  F(A) &=\sum_{h\in \Hyp}\left(Q(h)-Q(h|A)\right)\err(h,h^*)\,,\text{where}\\
  Q(h|A) &= P_0(h)\prod_{\substack{x\in A\\y(x)\neq\sgn(h(x))}} P(y(x)|h,x)
\end{align*}
is the \emph{unnormalized posterior} of the learner. As shown in the supplementary material, this surrogate objective function satisfies {\em submodularity}, a natural diminishing returns condition. Submodular functions can be effectively optimized using a greedy algorithm, which, at every iteration, adds the example that maximally increases the surrogate function $F$ \citep{1978-_nemhauser_submodular-max}.
We will show that maximizing $F(A)$ gives us good results in terms of the original, normalized objective function $R(A)$, that is, the expected error reduction of the learner. In fact, we show that running the algorithm until $F(A)\geq \mathbb{E}[\lerr]-P_0(h^*)\epsilon$ is sufficient to produce a feasible solution to Problem~\eqref{eq:probstat}, providing a natural stopping condition. 
We call the greedy algorithm for $F(A)$ \algo, and describe it in Policy~\ref{alg:UGTP}.
\begin{algorithm}[tb]
   \caption{Teaching Policy \algo}\label{alg:UGTP}
\begin{algorithmic}[1]
   \STATE{{\bfseries Input:} \looseness -1 examples $\X$, hyp.~$\Hyp$, prior $\prior$, error $\epsilon$.}
   \STATE{{\bfseries Output:} teaching set $A$}
   \STATE{$A\gets\emptyset$}
   \WHILE{$F(A)<\mathbb{E}[\lerr]-P_0(h^*)\epsilon$}
   \STATE{$x\gets\argmax_{x\in\X}\left(F(A\cup\{x\})\right)$}
   \STATE $A\gets A\cup\{x\}$
   \ENDWHILE
\end{algorithmic}
\end{algorithm}

Note that in the limit $\alpha\rightarrow \infty$, $F(A)$ quantifies the prior mass of all hypotheses $h$ (weighted by $\err(h,h^*)$) that are inconsistent with the examples $A$. Thus, in this case, $F(A)$ is simply a weighted coverage function, consistent with classical work in {\em noise-free} teaching \cite{1992-jcss_kearns_complexity-of-teaching}.


\if0
Notice that, according to our assumptions, the learner follows a Markov process, whose transitions are controlled by the examples shown. Unfortunately the learner's state (his current hypothesis, together with the set of labeled examples he has seen) is only partially observable.
While the teaching process could be modeled as a Partially Observable Markov Decision Process (POMDP) with exponentially large state space, solving for the optimal policy of presenting examples is intractable. Instead of attempting to find the optimal teaching policy, we will consider simple greedy heuristics, and demonstrate their effectiveness in our experiments (Section~\ref{sec:experiment}).

\vspace{-1mm}
\paragraph{The case of $\alpha=\infty$, $\beta=\infty$ and no feedback.}
We first consider the special case where the learner always deterministically predicts, and -- if incorrect -- moves to a consistent hypothesis uniformly at random. In this case, every candidate training example 
is inconsistent with a subset $S_x\subseteq\Hyp$ of the hypothesis space: $S_x=\{h\in \Hyp: \sgn(h(x))\neq \sgn(h^*(x))\}$. Thus, a natural teaching strategy is to attempt to greedily eliminate as much mass of inconsistent hypotheses as possible. Formally, having taught examples $x_1,\dots,x_t$, let $\Hyp_t=\{h\in\Hyp: \sgn(h(x_\tau))= \sgn(h^*(x_\tau))\;\forall 1\leq\tau\leq t\}$ be the remaining set of consistent hypotheses. In this setting, a natural approach is to greedily teach: 
\begin{equation}x_{t+1}=\argmin_{x\in\X} P'(\Hyp_t\cap S_x),\label{eqn:greedy}\end{equation}
where, for a subset $\Hyp'\subseteq\Hyp$, $P'(\Hyp')=\sum_{h\in\Hyp'}P'(h)$ is the amount of teacher's prior probability mass associated with the hypotheses in $\Hyp'$.
We stop teaching once all inconsistent hypotheses are eliminated, i.e., for all $h\in\Hyp_t$ and all $x\in\X$ it holds that $\sgn(h(x))=\sgn(h^*(x))$.
How well does this greedy algorithm perform? It turns out that the set function $F:2^\X\rightarrow\mathbb{R}$ defined by $F(A)=1-P'\Bigl(\Hyp\bigcap_{x\in A} S_x\Bigr)=P'\Bigl(\bigcup_{x\in A} S_x\Bigr)$ is a weighted set coverage function, and as such
{\em monotone and submodular}. This means that whenever $A\subseteq B\subset\X$, and $x\in\X$, it holds that $F(A\cup\{x\})-F(A)\geq F(B\cup\{x\})-F(B)$. For such functions, it is known that the greedy algorithm (implemented in \eqref{eqn:greedy}) provides a near-optimal solution \cite{1978-_nemhauser_submodular-max} to the problem of finding a set $A\subseteq \X$ of minimum size such that $F(A)=F(\X)$, which is satisfied iff all inconsistent hypotheses are eliminated. Moreover, under reasonable complexity theoretic assumptions, no efficient algorithm provides better solutions \cite{1998-_feige_threshold-of-ln-n}.

\vspace{-1mm}
\paragraph{The general case}
What to do in the more general setting where $\alpha$ and $\beta$ are arbitrary? Here, hypotheses are not simply eliminated, but just become less likely. Further, we would like to take the learner's guesses into account as partial feedback.
To address these challenges, the teacher performs inference in her Bayesian model of the learner.
In particular, the teacher starts with a prior $P_0'(h_0)$ of the learner's initial hypothesis.
She then tracks the learner's belief utilizing the learner's transition and guessing model, as described in Section~\ref{sec:learner}:
\vspace{-1mm}
\begin{align*}
  P'_t(h\mid x_{1:t}, l_{1:t})  \propto \sum_{h'\in\Hyp}P'_{t-1}(h'\mid x_{1:t-1},l_{1:t-1}) \cdot & \\
                      P(h|h',x_{1:t-1},y_{1:t-1},l_{t-1}) \cdot P(l_t\mid x_t,h_t)\,. &
\end{align*}
How should examples be chosen? We propose the following greedy rule:
\begin{align*}
x_{t+1}= & \argmax_{x\in\X\setminus\{x_1,\ldots,x_t\}} \\
            & \E_{h\sim P'_t} \Bigl|P(l=+1\mid h,x) - P(l=+1\mid h^*,x)\Bigr|\,.
\end{align*}
In words, the teacher picks the example for which the probability of obtaining guess +1 (w.l.o.g.) maximally differs from the probability of obtaining the same guess under the correct hypothesis, in expectation w.r.t.~her current belief. This greedy rule prefers examples where the learner (according to the teacher's current belief) strongly disagrees with the correct hypothesis $h^*$. As we find in our experiments, it generally prefers examples about which the correct hypothesis $h^*$ has high confidence (i.e., $P(l=+1\mid h^*,x)$ close to 0 or 1), while maximizing the chance the learner disagrees. Thus, it trades ``clarity'' (simplicity) of the examples with their ability to disambiguate.


As a further justification, it can be seen that in the case of $\alpha=\infty$ (where $P(l=+1\mid h,x)\in\{0,1\})$ and $\beta=\infty$ (where feedback does not help), this greedy rule will always pick examples where the probability that the learner disagrees with the teacher are maximized. Hence, it makes identical decisions to the greedy rule \eqref{eqn:greedy}, and thus enjoys the same performance guarantees in that setting (i.e., $\alpha=\beta=\infty$).

When should we stop teaching? In general there are several options, for example the teacher would stop when the majority of the probability mass of $P_t'$ is on one hypothesis, or the  learner's expected error on the unlabeled examples $\X$ (according to the teacher's belief $P_t'$) is below some threshold $\varepsilon$. In our experiments, we choose to stop after a predefined number of iterations, regardless of the accuracy of the learner.
\fi

\vspace{-3mm}
\subsection{Approximation Guarantees}\label{sec:anal}
\vspace{-1mm}
The following theorem ensures that if we choose the examples in a greedy manner to maximize our surrogate objective function $F(A)$, as done by Policy~\ref{alg:UGTP}, we are close to being optimal in some sense.
\begin{theorem}\label{thm:UGTP}
  Fix $\epsilon>0$. The \algo~Policy~\ref{alg:UGTP} terminates after at most $\OPT(P_0(h^*)\epsilon/2)\log\frac{1}{P_0(h^*)\epsilon}$ steps with a set $A$ such that $\mathbb{E}[\lerr\mid A]\leq \epsilon$.
\end{theorem}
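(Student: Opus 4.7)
The plan is to combine two ingredients: (i) show that the stopping condition of Policy~\ref{alg:UGTP} guarantees feasibility, i.e.\ $\E[\lerr\mid A]\leq\epsilon$, and (ii) bound the number of greedy iterations by applying the classical Nemhauser--Wolsey--Fisher coverage analysis to the monotone submodular surrogate $F$ (submodularity being assumed per the supplement).

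For the feasibility part, let $U(A)=\sum_{h\in\Hyp}Q(h\mid A)\,\err(h,h^*)$ and $T=\E[\lerr]$, so that $F(A)=T-U(A)$ and $\E[\lerr\mid A]=U(A)/Z(A)$. The key observation is that $h^*$ is consistent with every labeled example, so the product in $Q(h^*\mid A)$ is empty and $Q(h^*\mid A)=P_0(h^*)$. Hence $Z(A)\geq P_0(h^*)$ for every $A$. When the algorithm exits, $F(A)\geq T-P_0(h^*)\epsilon$ yields $U(A)\leq P_0(h^*)\epsilon$, so dividing by $Z(A)\geq P_0(h^*)$ gives $\E[\lerr\mid A]\leq\epsilon$, as required.

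For the iteration bound, I would use the standard greedy estimate for monotone submodular $F$ with $F(\emptyset)=0$: for any reference set $A^*$, $F(A^*)-F(A_t)\leq e^{-t/|A^*|}\,F(A^*)$. Choose $\epsilon_0=P_0(h^*)\epsilon/2$ and take $A^*=A^*_{\epsilon_0}$, the optimal solution of Problem~(2) at tolerance $\epsilon_0$, with $k^*=\OPT(\epsilon_0)$. Since $\E[\lerr\mid A^*]\leq\epsilon_0$ and $Z(A^*)\leq 1$, we get $U(A^*)\leq\epsilon_0$, and therefore $F(A^*)\geq T-P_0(h^*)\epsilon/2$. Combining, $T-F(A_t)\leq P_0(h^*)\epsilon/2+e^{-t/k^*}\,T$, which drops below $P_0(h^*)\epsilon$ once $t\geq k^*\log\!\big(2T/(P_0(h^*)\epsilon)\big)$. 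Using $T\leq 1$, this gives the claimed bound $\OPT(P_0(h^*)\epsilon/2)\log\frac{1}{P_0(h^*)\epsilon}$ up to the absolute constant.

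The main obstacle is the mismatch between the quantity we actually want to control (the normalized posterior error $\E[\lerr\mid A]$) and the unnormalized surrogate $F(A)$ that we can greedily optimize: dividing by $Z(A)$ destroys submodularity and cannot be folded directly into the greedy analysis. The workaround is to absorb the worst-case factor $Z(A)\geq P_0(h^*)$ both into the stopping threshold (to secure feasibility) and into the definition of the comparator set $A^*_{\epsilon_0}$ (to ensure $F(A^*)$ is close enough to $T$). This double use of $P_0(h^*)$ is precisely what produces the $P_0(h^*)\epsilon/2$ tolerance inside $\OPT$ and the $\log(1/P_0(h^*)\epsilon)$ factor; once it is isolated, the remainder is a direct invocation of the submodular coverage argument.
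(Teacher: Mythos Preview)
Your proposal is correct and follows essentially the same route as the paper: feasibility via $Z(A)\geq Q(h^*\mid A)=P_0(h^*)$, and the iteration bound via the Nemhauser--Wolsey--Fisher inequality applied to the monotone submodular surrogate $F$, with the comparator set chosen at tolerance $P_0(h^*)\epsilon/2$. The only organizational difference is that the paper defines its comparator directly in terms of $F$ (the smallest set with $F(A^*)\geq E-P_0(h^*)\epsilon/2$) and then implicitly invokes the ``necessity'' observation $\E[\lerr\mid A]\geq E-F(A)$ to link this to $\OPT$, whereas you take $A^*_{\epsilon_0}$ from Problem~\eqref{eq:probstat} and use $Z(A^*)\leq 1$ to obtain $F(A^*)\geq T-\epsilon_0$; both arrive at the same bound $k^*\log\frac{2T}{P_0(h^*)\epsilon}$, with the same harmless factor-of-$2$ slack relative to the stated $\log\frac{1}{P_0(h^*)\epsilon}$.
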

\vspace{-1mm}
Thus, informally, Policy~\ref{alg:UGTP} uses a near-minimal number of examples when compared to {\em any} policy achieving $O(\epsilon)$ error (viewing $P_0(h^*)$ as a constant).

The main idea behind the proof of this theorem is that we first observe that $F(A)$ is submodular and thus the greedy algorithm gives a set reasonably close to $F$'s optimum. Then we analyze the connection between maximizing $F(A)$ and minimizing the expected error of the learner, $\mathbb{E}[\lerr\mid A]$.
A detailed proof can be found in the supplementary material.

Note that maximizing $F(A)$ is not only sufficient, but also {\em necessary} to achieve $\epsilon$ precision. Indeed, it is immediate that
  $
    P(h|A) \geq Q(h|A)\,,
$
  which in turn leads to
  \vspace{-1mm}
\begin{align*}
  \mathbb{E}[\lerr|A]\!=\!\sum_{h\in \Hyp}\!P(h|A)\err(h,h^*) &\!\geq\!   \sum_{h\in \Hyp}Q(h|A)\err(h,h^*)\\
  &=\mathbb{E}[\lerr]-F(A)\,.
\end{align*}
Thus, if $\mathbb{E}[\lerr]-F(A)>\epsilon$, then the expected posterior error $\mathbb{E}[\lerr\mid A]$ of the  learner is also greater than $\epsilon$.

\begin{figure*}[t]
\centering
\includegraphics[width=0.85\textwidth]{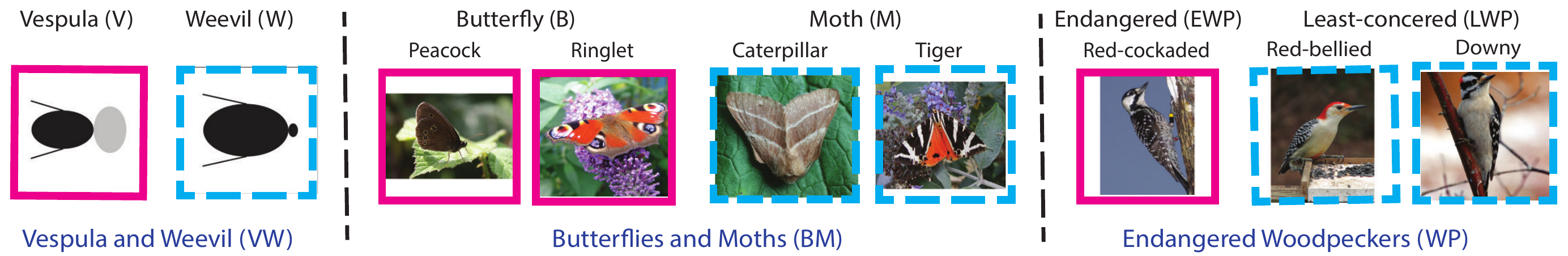}
\vspace{-4mm}
\caption{Sample images of all the three data sets used for the experiments.}
\vspace{-5mm}
\label{fig:fig_X_All}
\end{figure*}

  \vspace{-2mm}
\subsection{Teaching Complexity for Linear Separators}
  \vspace{-1mm}
Theorem~\ref{thm:UGTP} shows that greedily optimizing $F(A)$ leads to low error with a number of examples not far away from the optimal. Now we show that, under some additional assumptions, the optimal number of examples is not too large.

We consider the important case where the set of hypotheses $\Hyp = \SetHyp$ consists of linear separators
$h(x)=w_h^T x+b_h$
for some weight vector $w_h\in\real^d$ and offset $b_h\in\real$. The label predicted by $h$ for example $x$ is $\sgn(w_h^T x+b_h)$.

We introduce an additional assumption, namely $\lambda$-richness of $(\X,\Hyp)$. First notice that $\Hyp$ partitions $\X$ into polytopes (intersections of half-spaces), where within one polytope, all  examples are labeled the same by every hypothesis, that is, within a polytope $\poly$, for every $x,x'\in\poly\subseteq\real^d$ and $h\in\Hyp$, $\sgn(h(x))=\sgn(h(x'))$.
We say that $\X$ is $\lambda$-rich if any $\poly$ contains at least $\lambda$ examples. In other words, if the teacher needs to show (up to) $\lambda$ distinct examples to the learner from the same polytope in order to reduce her error below some level, this can be done.

 \begin{theorem}\label{thm:stronger}
  Fix $\varepsilon>0$. Suppose that the hypotheses are hyperplanes in $\real^{d}$ and that $(\X,\Hyp)$ is $(8\log^2\frac{2}{\epsilon})$-rich. Then the \algo policy achieves learner error less than $\epsilon$ after at most
  $
    m = 8\log^2\frac{2}{\epsilon}
  $
  teaching examples. 
\end{theorem}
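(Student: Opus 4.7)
The plan is to combine Theorem~\ref{thm:UGTP} with an explicit upper bound $\OPT(\epsilon')=O(\log(2/\epsilon))$ specific to linear separators under $\lambda$-richness. Setting $\epsilon'=P_0(h^*)\epsilon/2$ and plugging into Theorem~\ref{thm:UGTP} gives an overall teaching length of $O(\log(2/\epsilon)\cdot\log(1/(P_0(h^*)\epsilon)))$, and after tracking constants (together with the observation that once $h^*$ is retained in the active support of the learner's posterior the $\log(1/P_0(h^*))$ factor can be absorbed) one reaches the claimed $8\log^2(2/\epsilon)$.

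For the structural bound on $\OPT(\epsilon')$, I would construct an explicit witness teaching set of size $O(\log(2/\epsilon))$. The class $\Hyp$ partitions $\X$ into polytopes, each labeled uniformly by every $h\in\Hyp$ and, by the richness hypothesis, containing at least $\lambda=8\log^2(2/\epsilon)$ examples. The construction groups examples by polytope: for each ``informative'' polytope $\poly$ abutting $h^*$'s decision boundary, $\lambda$-richness lets me extract several examples at which $|h(x)|$ is bounded away from zero for every $h$ that disagrees with $h^*$ on $\poly$. Showing $\Theta(\log(1/\epsilon'))$ such examples drives the unnormalized weight $Q(h\mid A)$ of every ``bad'' hypothesis (one with $\err(h,h^*)\geq\epsilon'$) below $\epsilon'/|\Hyp|$, because the logistic factors $P(y\mid h,x)$ compound geometrically. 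A halving-style covering argument along the boundary of $h^*$ is then used to show that $O(\log(1/\epsilon'))$ distinct informative polytopes suffice to hit every bad $h$ on at least one of their representative examples; taking one representative per chosen polytope then gives the desired witness set.

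The main obstacle will be this halving-style covering step. Hyperplane arrangements in $\real^d$ admit generically exponentially many polytopes, so one cannot afford to cover them naively; instead one must argue that a greedy selection of polytopes along $h^*$'s decision boundary cuts the surviving mass of bad hypotheses by a constant factor per round, giving $O(\log(1/\epsilon'))$ rounds in total, analogous to a binary search in $\real^d$. A secondary subtlety is that $|h(x)|$ varies across a single polytope, so one has to verify that the $\lambda$-rich supply of examples contains enough on which the logistic penalty is uniformly large --- this is precisely where the second $\log(1/\epsilon)$ factor in $\log^2(2/\epsilon)$ naturally arises, matching the exponent in the richness parameter $\lambda=8\log^2(2/\epsilon)$.
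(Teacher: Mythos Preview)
Your high-level architecture---bound $\OPT$ by exhibiting a short witness teaching set, then invoke Theorem~\ref{thm:UGTP}---is exactly the paper's plan. But the mechanism you sketch for producing the witness is quite different from the paper's, and the step you flag as ``the main obstacle'' is the one you have not actually solved.

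The paper does \emph{not} construct the witness set directly. Instead it defines a \emph{randomized} teaching policy \Alggbs\ that, at each round, locates (via a discrete Ham--Sandwich/Nowak lemma) either a single polytope on which the current weighted vote is balanced, or a pair of neighbouring polytopes with opposite signs, and samples an example from there. It then tracks the Burnashev-type potential $\BZt_t=(1-P_t^{(t)}(h^*))/P_t^{(t)}(h^*)$ and shows $\E[\BZt_{t+1}/\BZt_t\mid\filter_t]\leq(3+\wt)/4$, giving exponential decay of $\E[\BZt_t]$. Setting the failure probability to $1/2$ and invoking the probabilistic method yields a deterministic sequence of length $O(\log(1/\epsilon))$ achieving the target, hence a bound on $\OPT$, after which Theorem~\ref{thm:UGTP} finishes. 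The $\lambda$-richness is used only to guarantee that \Alggbs\ never runs out of \emph{distinct} examples in a polytope it revisits; it is not used, as you suggest, to control variation of $|h(x)|$ within a polytope (the analysis replaces the logistic factor by a uniform worst-case constant $\wt\leq 1/2$).

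Your proposed route---greedily pick polytopes along the boundary of $h^*$ and argue each step halves the surviving bad mass---would need precisely the geometric fact that the paper imports from \citet{Nowak11}: that for \emph{any} weighting on $\Hyp$ there is a polytope (or neighbouring pair) that approximately bisects the mass. Without that lemma, ``analogous to binary search in $\real^d$'' is an aspiration, not an argument; there is no a priori reason a polytope adjacent to $h^*$ should cut a constant fraction of the mass of \emph{all} bad hypotheses simultaneously. If you want to salvage your approach, the cleanest fix is to adopt the Nowak bisection lemma as your halving device and drop the attempt at a direct covering; at that point your argument essentially becomes a derandomized reading of the paper's \Alggbs\ analysis. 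Your accounting for the second $\log$ factor and for the $P_0(h^*)$ term is also looser than the paper's (which simply carries $P_0(h^*)$ through Theorem~\ref{thm:RGTP} and absorbs it when bounding $\wt$), so be careful there when you track constants.
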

\vspace{-2mm}
The proof of this theorem is in the supplementary material. In a nutshell, the proof works by establishing the existence (via the probabilistic method) of a teaching policy for which the number of examples needed can be bounded -- hence also bounding the optimal policy -- and then using Theorem~\ref{thm:UGTP}.
\vspace{-2mm}

\section{Experimental Setup}\label{sec:experiments_setup}
\vspace{-1mm}
In our experiments, we consider three different image classification tasks: {i)} classification of synthetic insect images into two hypothetical species {\em Vespula and Weevil} (\emph{VW}); {ii)} distinguishing {\em butterflies and moths} on real images (\emph{BM}); and {iii)} identification of birds belonging to an {\em endangered species of woodpeckers} from real images (\emph{WP}). Our teaching process requires a known feature space for image dataset $\X$ (\emph{i.e.} the teaching set of images) and a hypothesis class $\Hyp$. While $\X$ and $\Hyp$ can be controlled by design for the synthetic images, we illustrate different ways on how to automatically obtain a crowd-based embedding of the data for real images. We now discuss in detail the experimental setup of obtaining $\X$ with its feature space and $\Hyp$ for the three different data sets used in our classification tasks.

\vspace{-2.5mm}
\subsection{Vespula vs.~Weevil}
\vspace{-1mm}
We first generate a classification problem using synthetic images $\X$ in order to allow controlled experimentation. As a crucial advantage, in this setting the hypothesis class $\Hyp$ is known by design, and the task difficulty can be controlled. Furthermore, this setting ensures that workers have no prior knowledge of the image categories.

\vspace{-0.5mm}
{\bf Dataset $\X$ and feature space:} We generated synthetic images of insects belonging to two hypothetical species: \emph{Weevil} and \emph{Vespula}.  The task is to classify whether a given image contains a Vespula or not. The images were  generated by varying body size and color as well as head size and color. A given image $x_i$ can be distinguished based on the following two-dimensional feature vector $x_{i}=[x_{i,1}=f_1,x_{i,2}=f_2]$ -- i) $f_1$: the head/body size ratio, ii) $f_2$: head/body color contrast. Fig.~\ref{fig:vw_picked_HX} shows the embedding of this data set in a two-dimensional space based on these two features. Fig.~\ref{fig:fig_X_All} shows sample images of the two species and illustrates that Weevils have short heads with color similar to their body, whereas Vespula are distinguished by their big and contrasting heads. A total of 80 images per species were generated by sampling the features $f_1$ and $f_2$ from two bivariate Gaussian distributions: $(\mu=[0.10,0.13]$, $\Sigma=[0.12,0;0,0.12]$) for Vespula and $(\mu=[-0.10,-0.13]$, $\Sigma=[0.12,0;0,0.12])$ for Weevil. A separate test set of 20 images per species were generated as well, for evaluating learning performance. 

\vspace{-0.5mm}
{\bf Hypothesis class $\Hyp$:}
Since we know the exact feature space of $\mathcal{X}$, we can use any parametrized class of functions $\mathcal{H}$ on $\X$. In our experiments, we use a class of linear functions for $\mathcal{H}$
, and further restricting $\Hyp$ to eight clusters of hypotheses, centered at the origin and rotated by $\sfrac{\pi}{4}$ from each other. Specifically, we sampled the parameters of the linear hypotheses from the following multivariate Gaussian distribution: $(\mu_i=[\sfrac{\pi}{4} \cdot i, 0]$, $\Sigma_i=[2,0;0,0.005])$, where $i$ varies from $0$ to $7$. Each hypothesis captures a different set of cues about the features that workers could reasonably have: {i)} ignoring a feature, {ii)} using it as a positive signal for Vespula, and {iii)} using it as a negative signal for Vespula. Amongst the generated hypothesis, we picked target hypothesis $h^*$ as the one with minimal error on teaching set $\X$. In order to ensure realizability, we then removed any data points $x \in \X$ where $\sgn(h^*(x))\neq y(x)$. Fig.~\ref{fig:vw_picked_HX} shows a subset of four of these hypothesis, with the target hypothesis $h^*$ represented in red. The prior distribution $P_0$ is chosen as uniform.
\begin{figure*}[t!]
\centering
   \subfigure[$\X$ and $\Hyp$ for VW]{
     \includegraphics[width=0.28\textwidth]{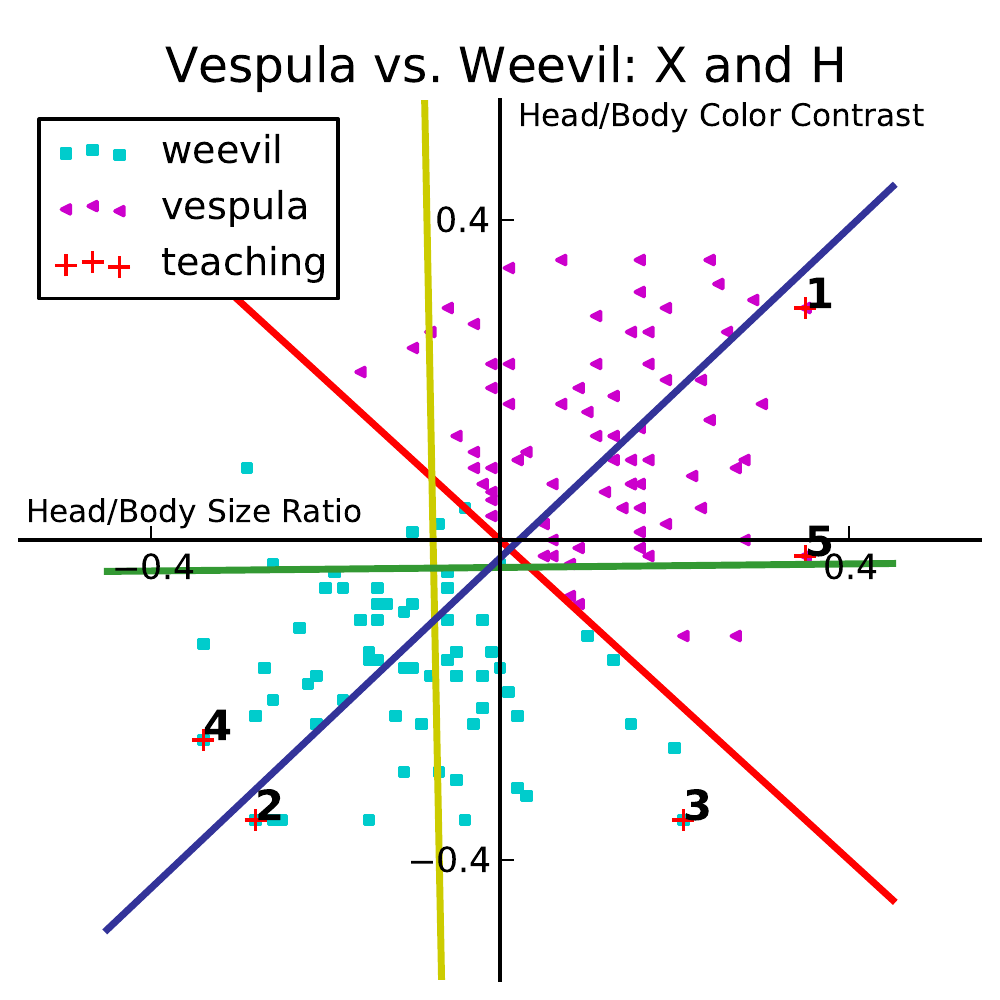}
     \label{fig:vw_picked_HX}
   }
   \subfigure[$\X$ and $\Hyp$ for BM]{
     \includegraphics[width=0.28\textwidth]{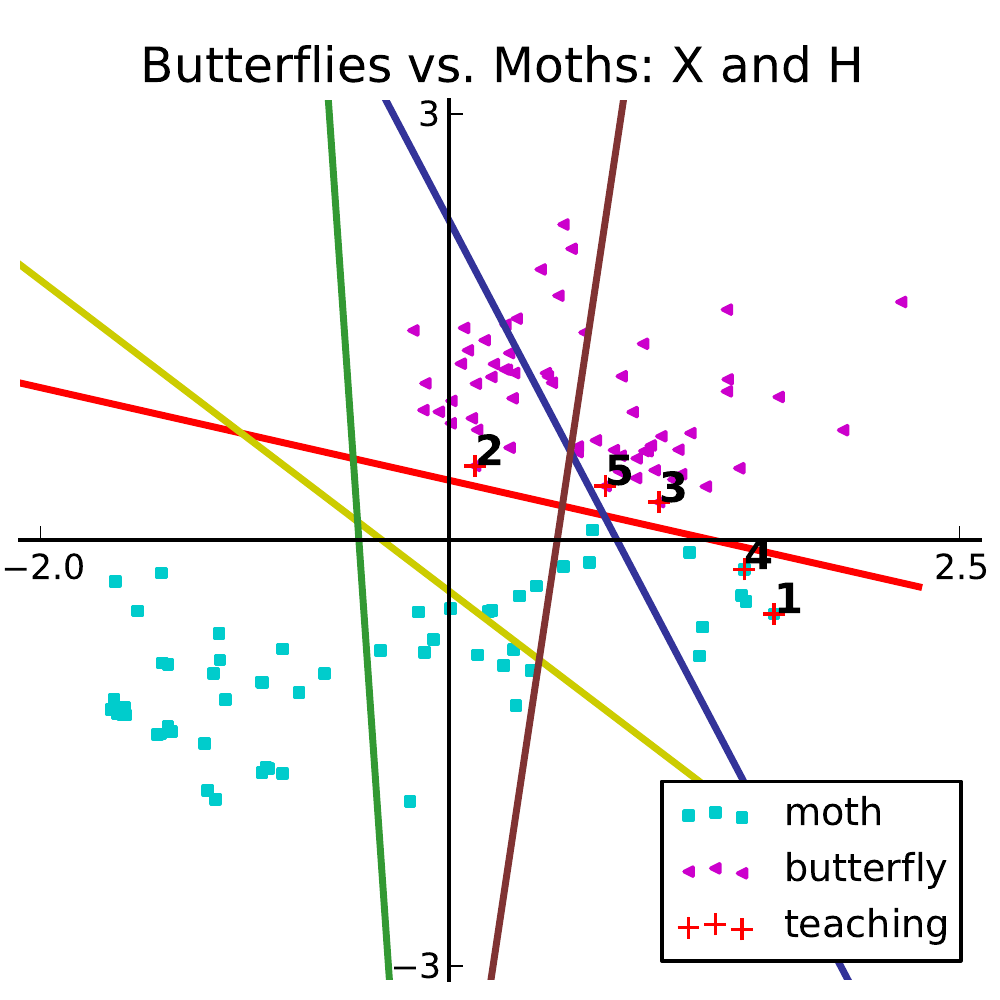}
     \label{fig:bm_picked_HX}
   }
   \subfigure[Features for WP]{
     \includegraphics[width=0.38\textwidth]{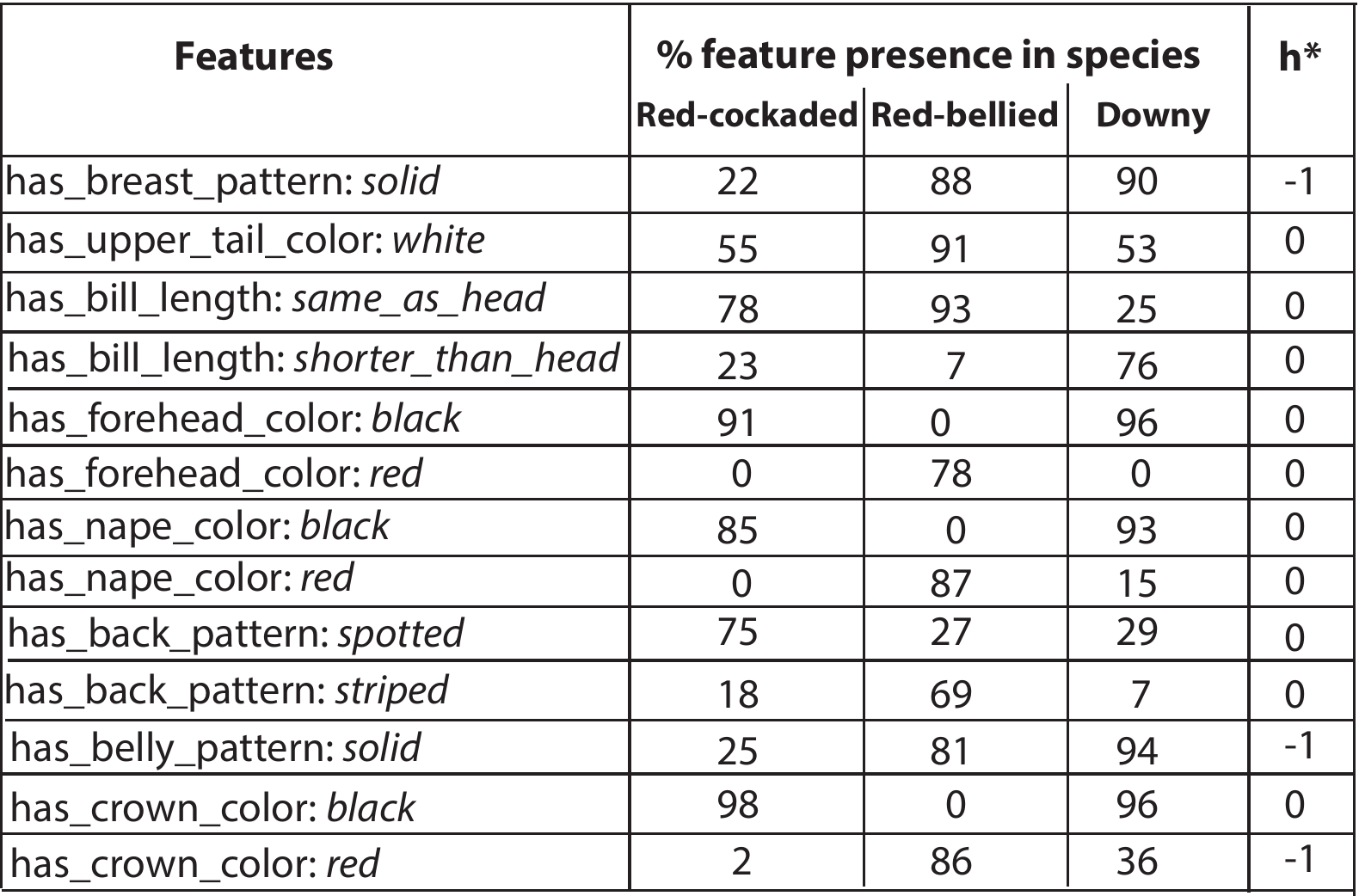}
     \label{fig:wp_picked_HX}
   }
\vspace{-4mm}
\caption{ (a) shows the 2-D embedding of synthetic images for \emph{Weevil} and \emph{Vespula} for the features: head/body size proportion ($f_1$) and head/body color contrast ($f_2$), normalized around origin.  It shows four of the hypotheses in $\Hyp$, with the target hypothesis $h^*$ in red.   (b) shows the 2-D embedding of images for the \emph{Moth} and \emph{Butterfly} data set, and the hypothesis for a small set of workers, as obtained using the approach of \citet{2010-nips_crowd-wisdom}. (c) shows the 13 features used for representation of woodpecker images and the $w_{h^*}$ vector of the target hypotheses. It also lists  the average number of times  a particular feature is present in the images of a given species.
}
\vspace{-4mm}
\label{fig:all_HX}
\end{figure*}
\vspace{-3mm}
\subsection{Butterflies vs.~Moths}
\vspace{-1mm}
{\bf Dataset images $\X$:} As our second dataset, we used a collection of 200 real images of four species of butterflies and moths from publicly available images\footnote{Imagenet: \url{http://www.image-net.org/}}: i) {\em Peacock Butterfly}, ii) {\em Ringlet Butterfly}, iii) {\em Caterpillar Moth}, iv) {\em Tiger Moth}, as shown in Fig.~\ref{fig:fig_X_All}. The task is to classify whether a given image contains a butterfly or not. While Peacock Butterfly and Caterpillar Moth are clearly distinguishable as butterflies and moths, Tiger Moth and Ringlet Butterfly are visually hard to classify correctly.  We used 160 of these images (40 per sub-species) as teaching set $\X$ and the remaining 40 (10 per sub-species) for testing.

\vspace{-0.5mm}
{\bf Crowd-embedding of $\X$:}
A Euclidean embedding of $\mathcal{X}$ for such an image set is not readily available. Human-perceptible features for such real images may be difficult to compute. In fact, this challenge is one major motivation for using crowdsourcing in image annotation. However, several techniques do exist that allow estimating such an embedding from a small set of images and a limited number of crowd labels. In particular, we used the approach of \citet{2010-nips_crowd-wisdom} as a preprocessing step. \citeauthor{2010-nips_crowd-wisdom}~propose a generative Bayesian model for the annotation process of the images by the workers and then use an inference algorithm to jointly estimate a low-dimensional embedding of the data, as well as a collection of linear hypotheses -- one for each annotator -- that best explain their provided labels.

We requested binary labels (of whether the image contains a butterfly) for our teaching set $\X$, $|\X|=160$, from a set of 60 workers.  By using the software \emph{CUBAM}\footnote{CUBAM: \url{https://github.com/welinder/cubam}},  implementing the approach of \citeauthor{2010-nips_crowd-wisdom}, 
we inferred a 2-D embedding of the data, as well as linear hypotheses corresponding to each of the 60 workers who provided the labels.  Fig.~\ref{fig:bm_picked_HX} shows this embedding of the data, as well as a small subset of workers' hypotheses as colored lines. 

\vspace{-0.5mm}
{\bf Hypothesis class $\Hyp$:}
The 60 hypothesis obtained through the crowd-embedding provide a prior distribution over linear hypotheses that the workers in the crowd may have been using. Note that these hypotheses capture various idiosyncrasies (termed ``schools of thought'' by \citeauthor{2010-nips_crowd-wisdom}) in the workers' annotation behavior -- i.e., some workers were more likely to classify certain moths as butterflies and vice versa.
To create our hypothesis class $\Hyp$, we randomly sampled 15 hypotheses from these. Additionally, we fitted a linear classifier that best separates the classes and used it as target hypothesis $h^*$, shown in red in Fig.~\ref{fig:bm_picked_HX}. The few examples in $\X$ that disagreed with $h^*$ were removed from our teaching set, to ensure realizability.

\vspace{-0.5mm}
{\bf Teaching the rest of the crowd:}
The teacher then uses this embedding and hypotheses in order to teach the rest of the crowd. 
We emphasize that -- crucially -- the embedding is {\em not} required for test images. Neither the workers nor the system used any information about sub-species in the images.
\begin{figure*}[t!]
\centering
   \subfigure[Data set VW: Test Error]{
     \includegraphics[width=0.31\textwidth]{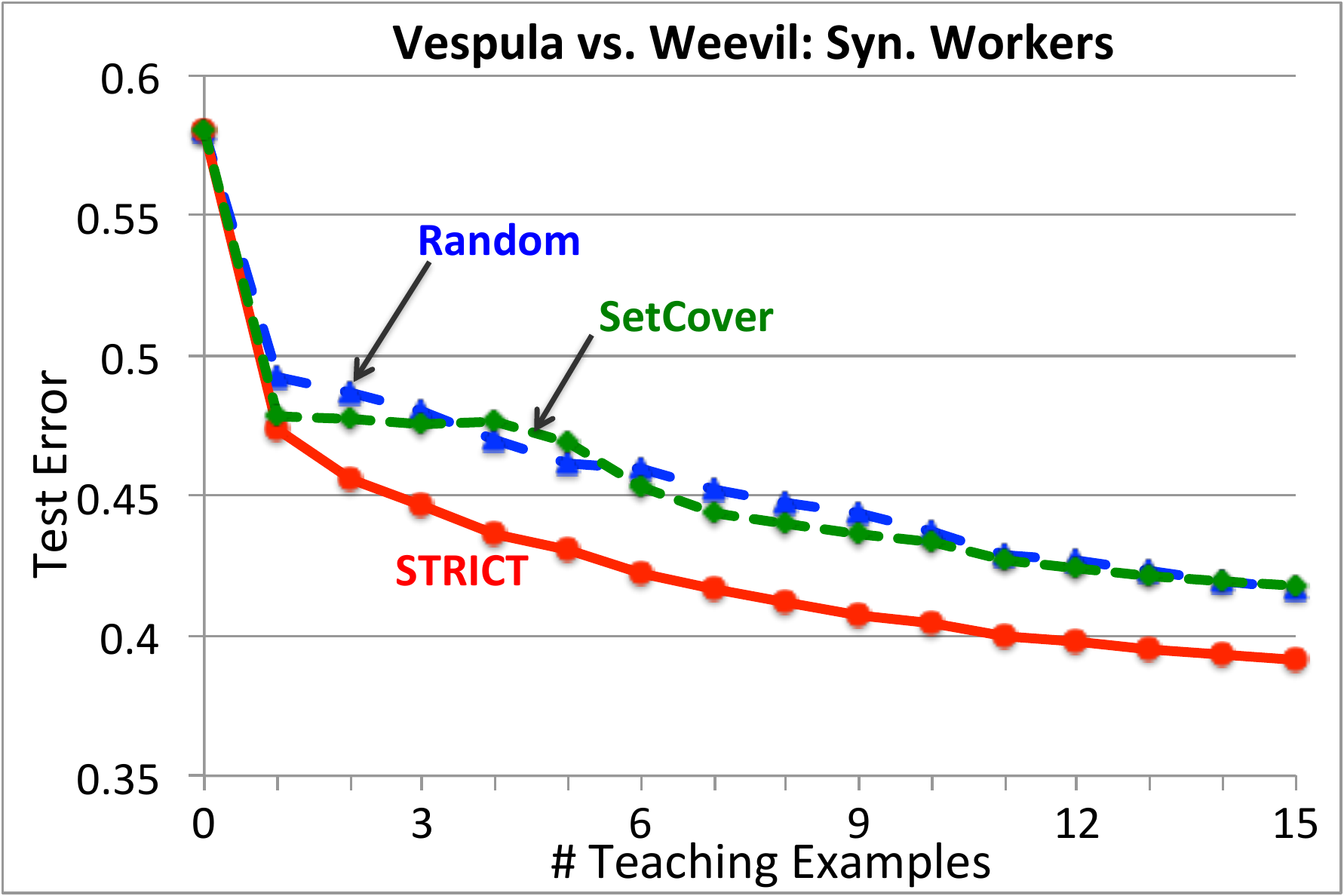}
     \label{fig:experiment-set-VW-Syn-Err}
   }
   \subfigure[Data set VW: Robustness  w.r.t $\alpha$]{
     \includegraphics[width=0.31\textwidth]{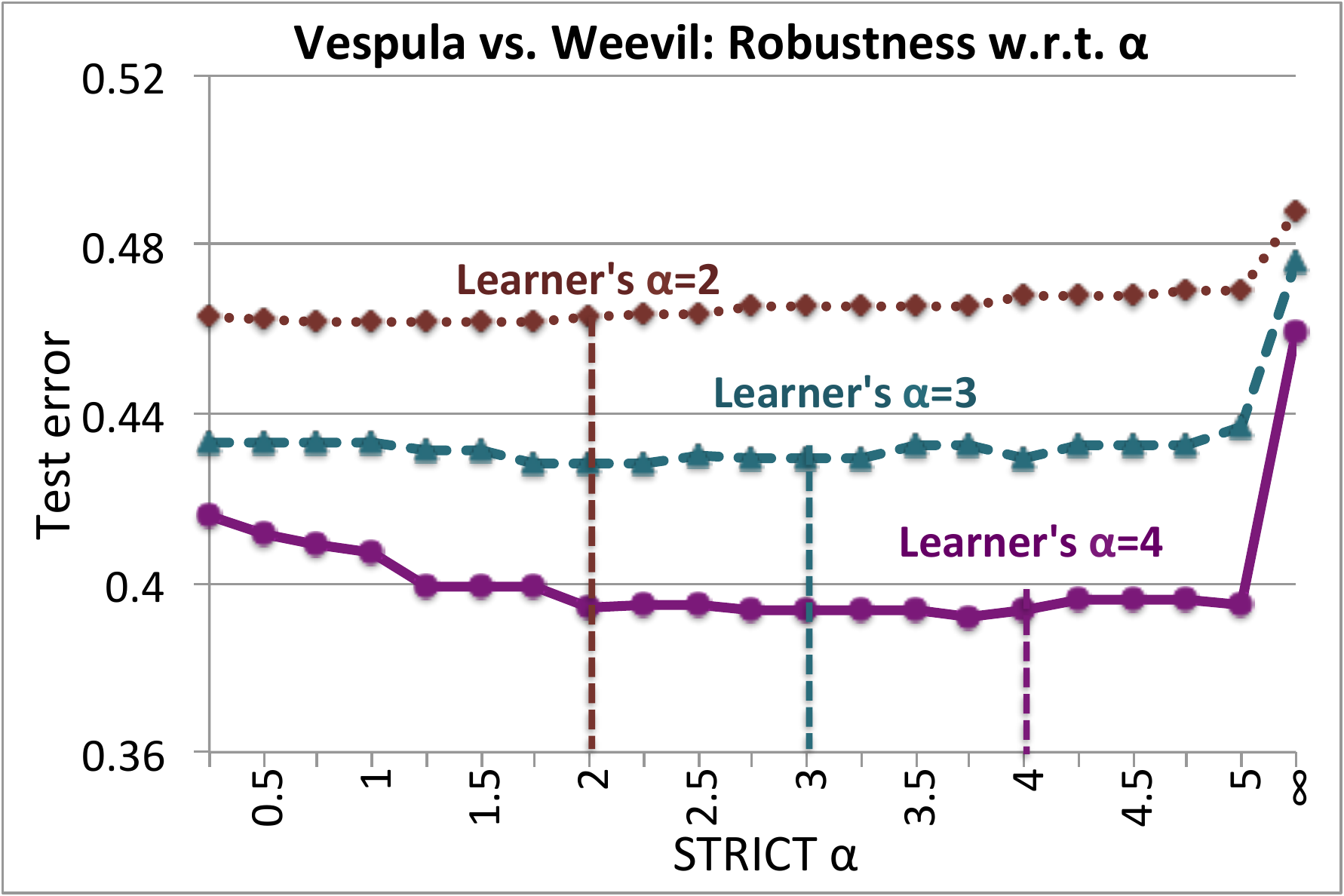}
     \label{fig:experiment-set-VW-Syn-VaryingAlpha}
   }
   \subfigure[Data set VW: Difficulty Level]{
     \includegraphics[width=0.31\textwidth]{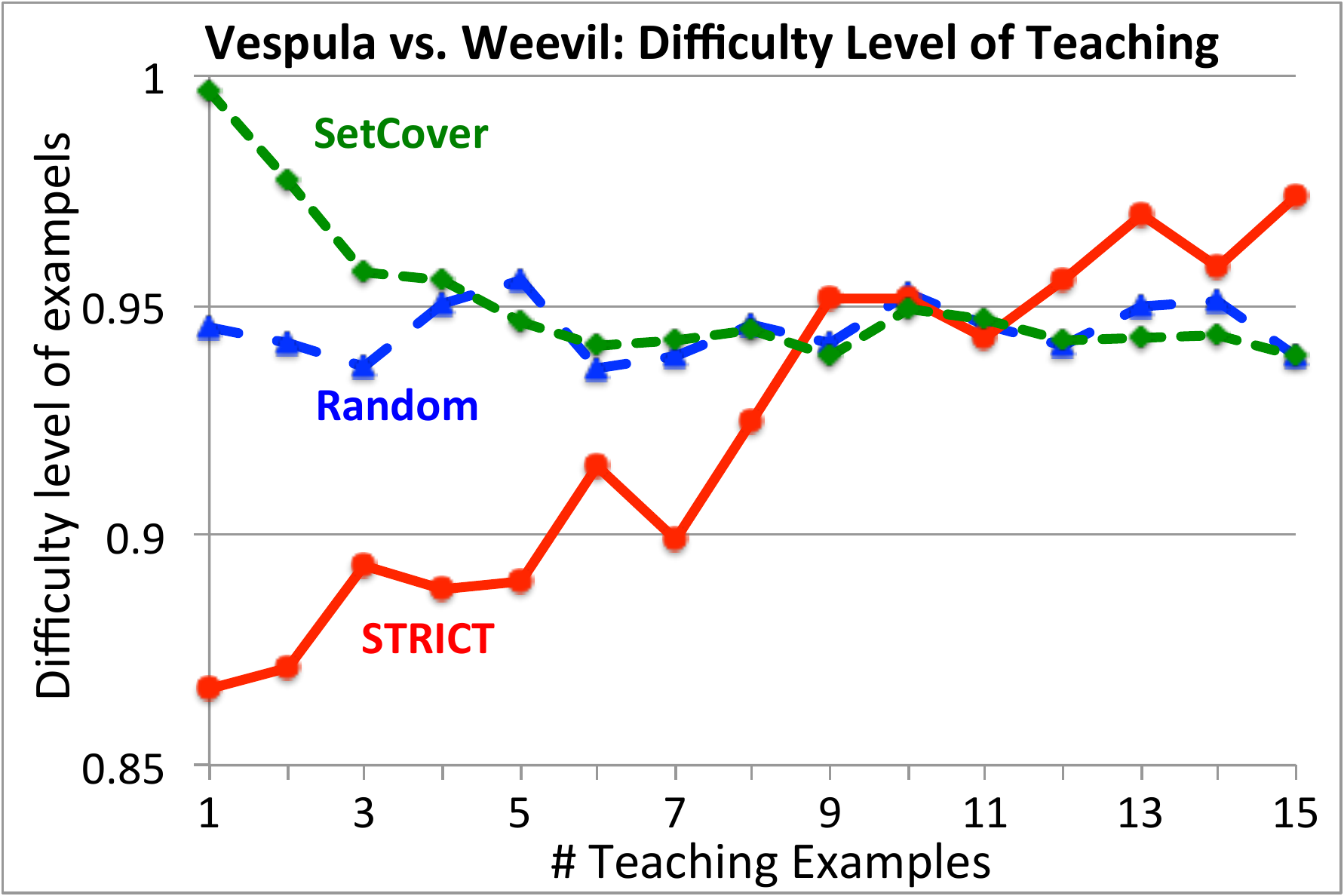}
     \label{fig:experiment-set-VW-Syn-Difficulty}
   }
\vspace{-4mm}
\caption{
(a) compares the algorithms' teaching performance in terms of simulated workers' test error (VW task). 
(b) shows the robustness of \algo w.r.t.~unknown $\alpha$ parameters of the learners. Thus, a noise-tolerant teacher (\ie, $\alpha<\infty$) performs much better than noise-free {\em SetCover} teaching, even with misspecified $\alpha$.  
(c) shows how the difficulty of \algo's examples naturally increase during teaching. 
}
\vspace{-4mm}
\label{fig:results-syn}
\end{figure*}
\vspace{-3mm}
\subsection{Endangered Woodpecker Bird Species}
\vspace{-1mm}
{\bf Dataset images $\X$:} Our third classification task is inspired from the eBird citizen science project \cite{sullivan2009ebird} and the goal of this task is to identify birds belonging to an endangered species of woodpeckers.  We used a collection of 150 real images belonging to three species of woodpeckers from a publicly available dataset \cite{WahCUB_200_2011}, with one endangered species: i) {\em Red-cockaded} woodpecker and other two species belonging to the least-concerned category: ii) {\em Red-bellied woodpecker}, iii) {\em Downy woodpecker}. On this dataset, the task is to classify whether a given image contains a red-cockaded woodpecker or not. We used 80 of these images (40 per red-cockaded, and 20 each per the other two species of least-concerned category) for teaching (\ie, dataset $\X$). We also created a testing set of 20 images (10 for red-cockaded, and 5 each for the other two species).

\vspace{-1mm}
{\bf Crowd-embedding of $\X$:}
We need to infer an embedding and hypothesis space of the teaching set for our teaching process. While an approach similar to the one used for the BM task is applicable here as well, we considered an alternate option of using metadata associated with these images, elicited from the crowd, as further explained below.

\vspace{-1mm}
Each image in this dataset is annotated with 312 binary attributes, for example, \emph{has\_forehead\_color:black}, or \emph{has\_bill\_length:same\_as\_head}, through workers on MTurk. The features can take values \{+1, -1, 0\} indicating the presence or absence of an attribute, or uncertainty (when the annotator is not sure or the answer cannot be inferred from the image given). Hence, this gives us an embedding of the data in $\mathbb{R}^{312}$. To further reduce the dimensionality of the feature space, we pruned the features which are not informative enough for the woodpecker species. We considered all the species of woodpeckers present in the dataset (total of 6), simply computed the average number of times a given species is associated positively with a feature, and then looked for features with maximal variance among the various species. By applying a simple cutoff of 60 on the variance, we picked the top $d=13$ features as shown in Fig~\ref{fig:wp_picked_HX}, also listing the average number of times the feature is associated positively with the three species.

\vspace{-0.5mm}
{\bf Hypothesis class $\Hyp$:}
We considered a simple set of linear hypotheses $h(x)=w^T x$ for $w\in\{+1,0,-1\}^d$, which place a weight of \{+1, 0, -1\} on any given feature and passing through the origin. The intuition behind these simple hypotheses is to capture the cues that workers could possibly use or learn for different features: ignoring a feature ($0$), using it as a positive signal ($+1$), and  using it as a negative signal ($-1$). Another set of simple hypotheses that we explored  are conjunctions and disjunctions of these features that can be created by setting the appropriate offset factor $b_h$ \cite{1992-colt_exact-specification-by-examples}. Assuming that workers focus only on a small set of features, we considered sparse hypotheses with non-zero weight on only a small set of features. To obtain the target hypothesis, we enumerated all possible hypotheses that have non-zero weight for at most three features. We then picked as $h^*$ the hypothesis with minimal error on  $\X$ (shown in Fig~\ref{fig:wp_picked_HX}). Again, we pruned the few examples in $\X$ which disagreed with $h^*$ to ensure realizability.
As hypothesis class $\Hyp$, we considered all hypotheses with a non-zero weight for at most two features along with the target $h^*$, resulting in a hypothesis class of size 339.

\vspace{-0.5mm}
{\bf Teaching the rest of the crowd:}
Given this embedding and hypothesis class, the teacher then uses the same approach as two previous datasets to teach the rest of the crowd. Importantly, this embedding is {\em not} required for test images.

\vspace{-3mm}
\section{Experimental Results}\label{sec:experiments_results}
\vspace{-1mm}
Now we present our experimental results, consisting of simulations and actual annotation tasks on MTurk.

\vspace{-0.5mm}
{\bf Metrics and baselines:} Our primary  performance metric is the test error (avg.~classification error of the learners), of simulated or MTurk workers on a hold-out test data set. 
We compare \algo against two baseline teachers: {\em Random} (picking uniformly random examples), and  {\em SetCover} (the classical noise-free teaching model introduced in Section~\ref{sec:learner}).
\begin{figure*}[t!]
\centering
   \subfigure[Examples in teaching sequence]{
     \includegraphics[width=0.80\textwidth]{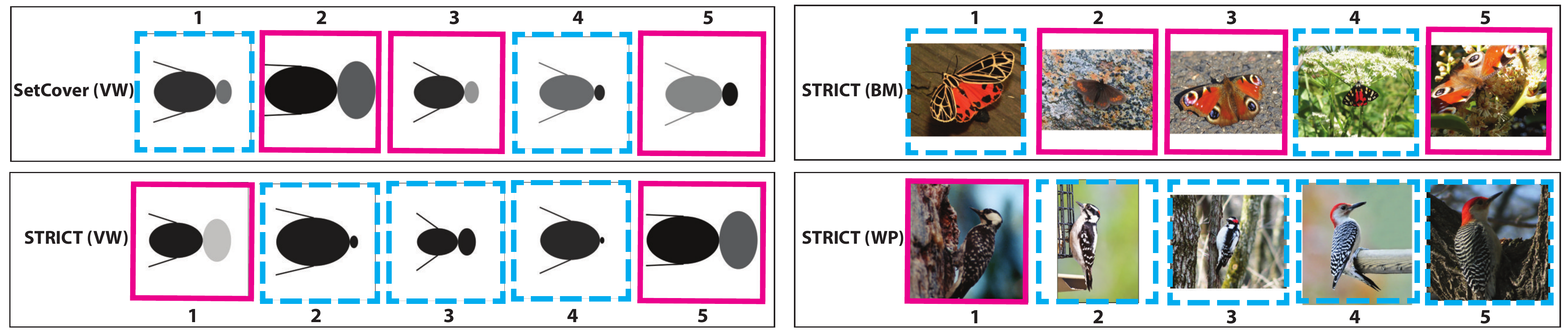}
     \label{fig:all_picked}
   }\\[-3mm]
   \subfigure[Test error on VW dataset]{
     \includegraphics[width=0.31\textwidth]{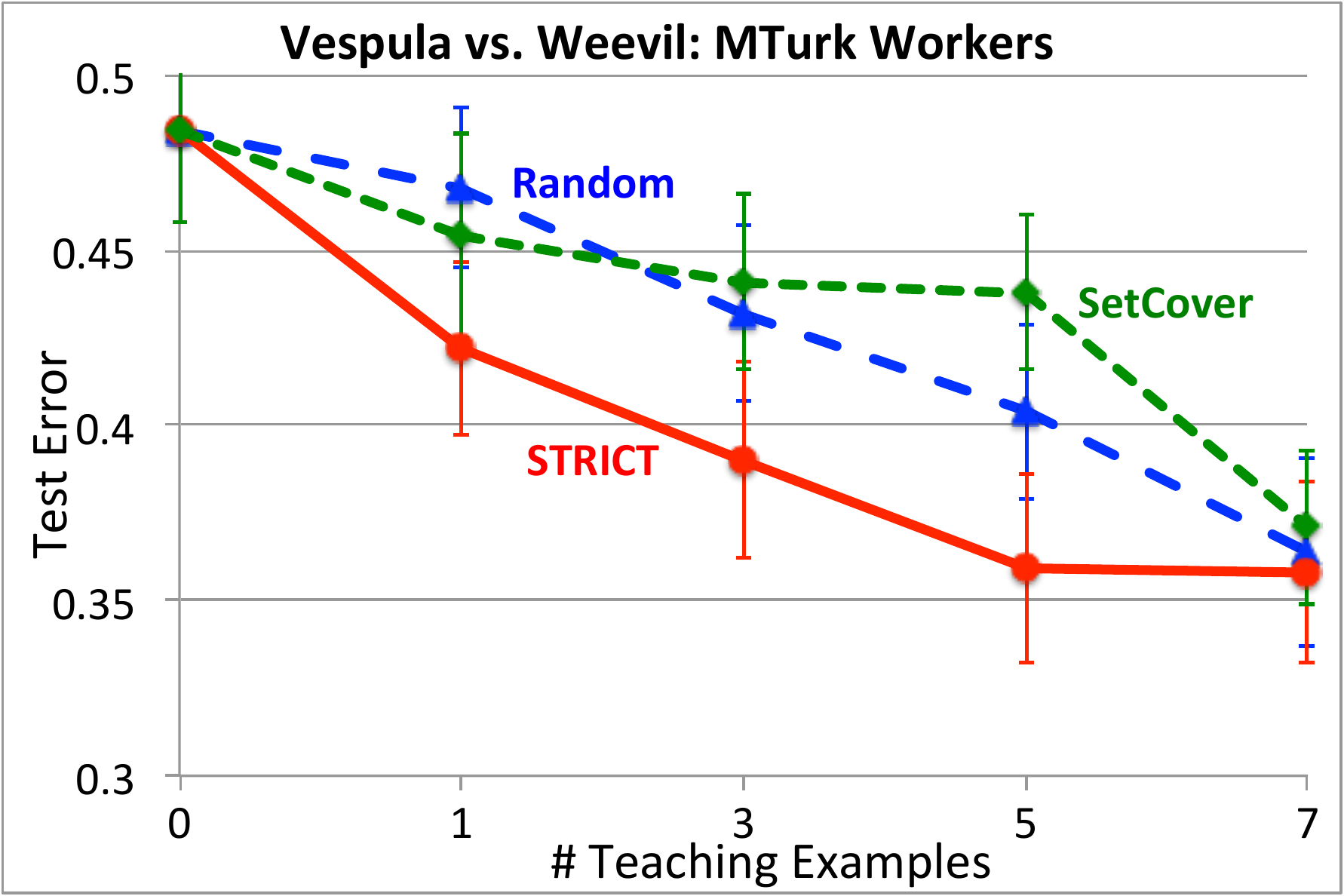}
     \label{fig:experiment-set-VW-MTurk}
   }
   \subfigure[Test error on BM dataset]{
     \includegraphics[width=0.31\textwidth]{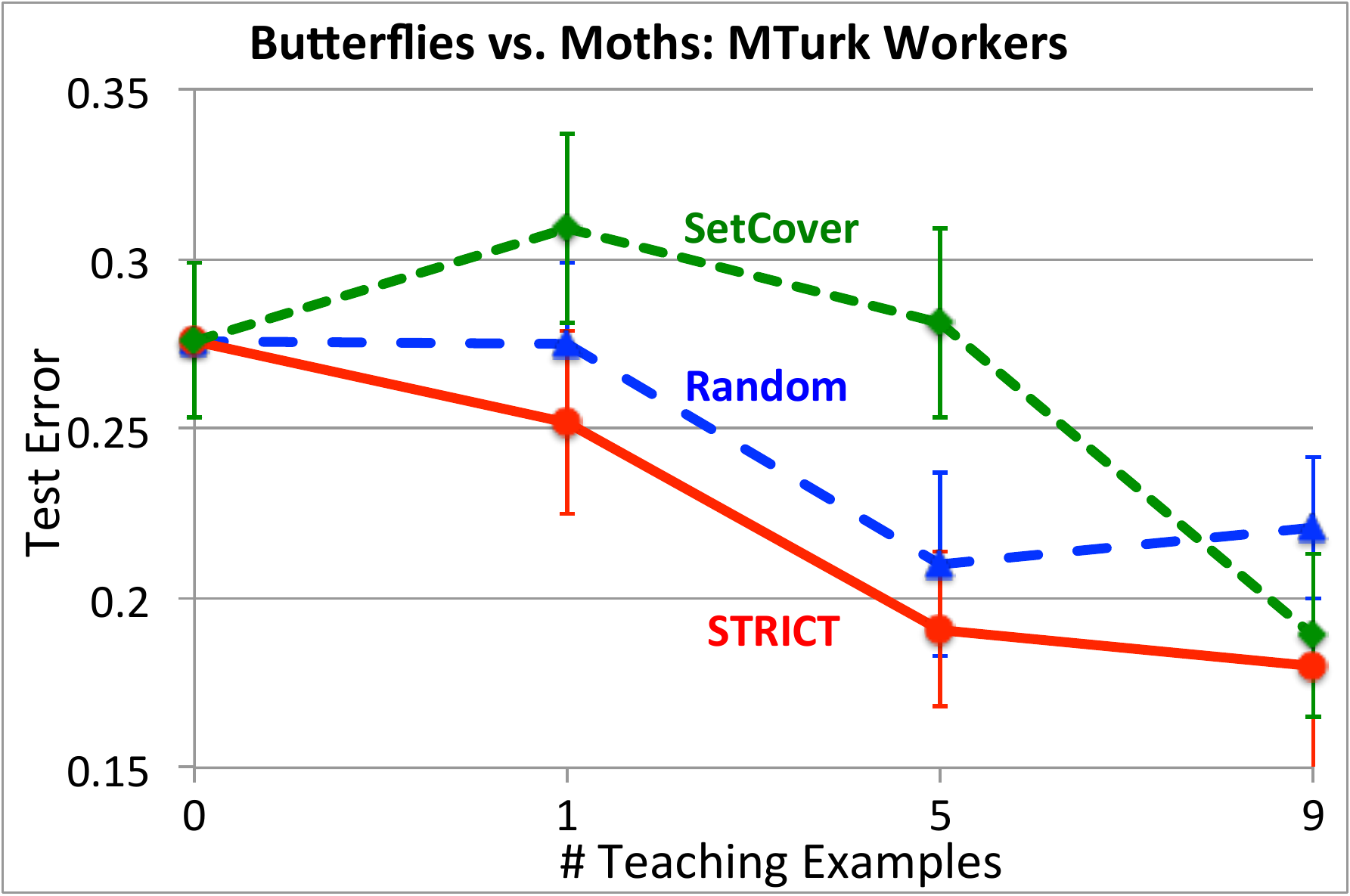}
     \label{fig:experiment-set-BM-MTurk}
   }
   \subfigure[Test error on WP dataset]{
     \includegraphics[width=0.31\textwidth]{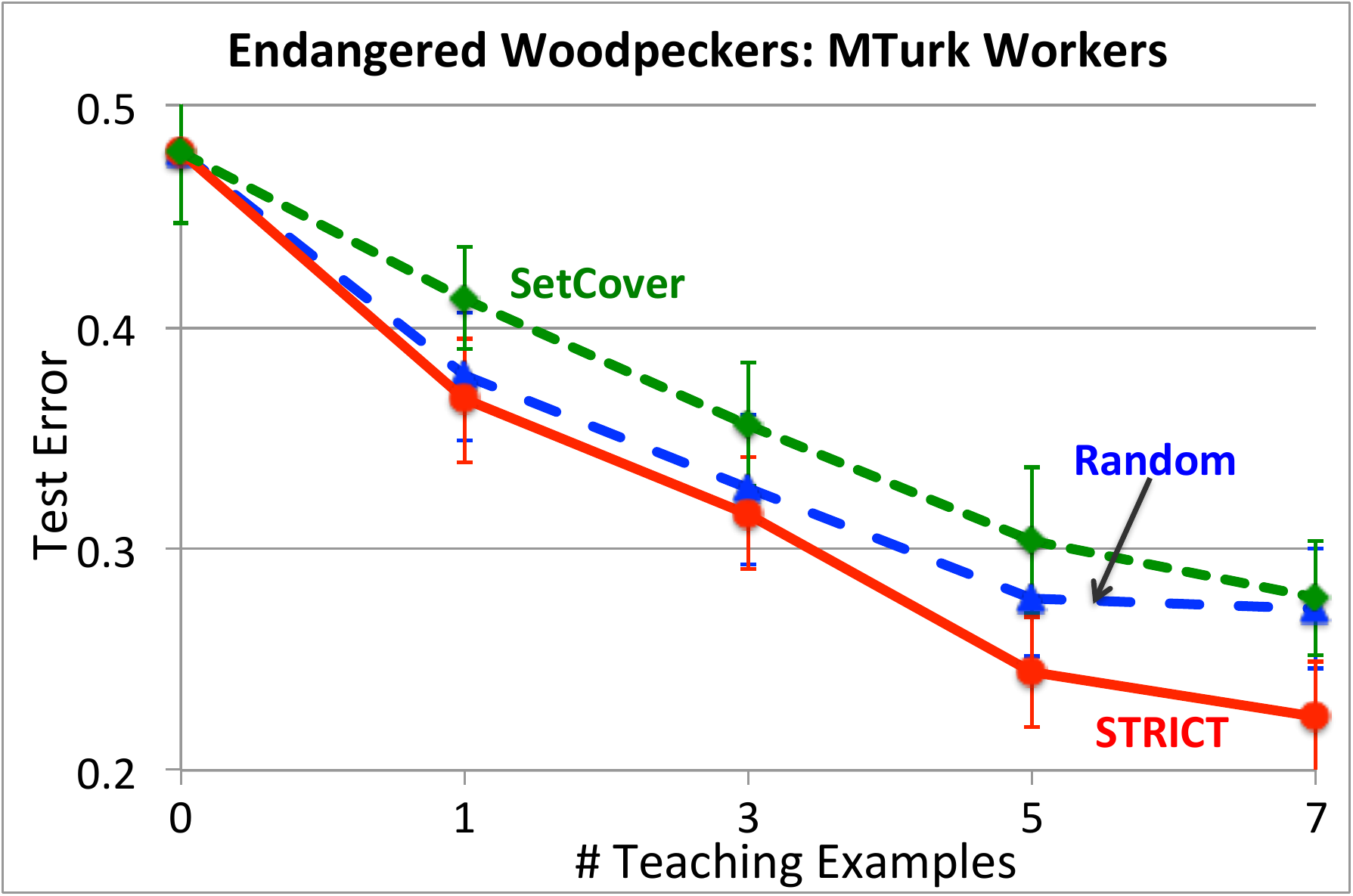}
     \label{fig:experiment-set-WP-MTurk}
   }
\vspace{-5mm}
\caption{(a) shows the order of examples picked by the teaching algorithms.
For the VW and BM tasks, we embed the examples in the 2D-feature space in Figs.~\ref{fig:vw_picked_HX} and~\ref{fig:bm_picked_HX}. 
(b-d) 
show the teaching performance of our algorithm measured in terms of test error of humans learners (MTurk workers) on hold out data. \algo is compared against {\em SetCover} and {\em Random} teaching, as we vary the length of teaching.
}
\vspace{-5mm}
\label{fig:results-mturk}
\end{figure*}
\vspace{-7mm}
\subsection{Results on Simulated Learners}
\vspace{-1mm}
We start with simulated learners and report results only on the VW dataset here for brevity. The simulations allow us to control the problem (parameters of learner, size of hypothesis space, \emph{etc.}), and hence gain more insight into the teaching process. Additionally, we can observe how robust our teaching algorithm is against misspecified parameters. 

\vspace{-1mm}
{\bf Test error.}  We simulated 100 learners with varying $\alpha$ parameters chosen randomly from the set $\{2,3,4\}$ and different initial hypotheses of the learners, sampled from $\Hyp$. We varied the experimental setting by changing the size of the hypothesis space and the $\alpha$ value used by \algo. Fig.~\ref{fig:experiment-set-VW-Syn-Err} reports results with $\alpha = 2$ for \algo and size of hypothesis class $96$ (2 hypotheses per each of the eight clusters, described in Section~\ref{sec:experiments_setup} for the VW dataset.

\vspace{-1mm}
{\bf How robust is \algo for a mismatched $\alpha$?} In real-world annotation tasks, the learner's $\alpha$ parameter is not known. In this experiment, we vary the $\alpha$ values used by the teaching algorithm \algo against three learners with values of $\alpha=1, 2 \text{ and } 3$.  Fig.~\ref{fig:experiment-set-VW-Syn-VaryingAlpha} shows that a conservative teacher using $\alpha$ bounded in the range $1$ to $5$ performs as good as the one knowing the true $\alpha$ value.

\vspace{-1mm}
{\bf On the difficulty level of teaching.}  Fig.~\ref{fig:experiment-set-VW-Syn-Difficulty} shows the difficulty of examples picked by different algorithms during the process of teaching, where difficulty is measured in terms of expected uncertainty (entropy) that a learner would face for the shown example, assuming that the expectation is taken w.r.t. the learners current posterior distribution over the hypotheses. {\em SetCover}  starts with  difficult examples assuming that the learner is perfect.  \algo starts with easy examples, followed by more difficult ones, as also illustrated in the experiments in Fig.~\ref{fig:all_picked}. Recent results of \citet{2013-aaai_teaching} show that such curriculum-based learning (where the difficulty level of teaching increases with time) indeed is a useful teaching mechanism. Note that our  teaching process inherently incorporates this behavior, without requiring explicit heuristic choices. Also, the transition of {\em SetCover}  to easier examples is just an artifact as {\em SetCover} randomly starts selecting examples once it (incorrectly) infers that the learner has adopted the target hypothesis.  The difficulty can be easily seen when comparing the examples picked by {\em SetCover} and \algo in Fig.~\ref{fig:all_picked}.
\vspace{-2mm}
\subsection{Results on MTurk Workers}
\vspace{-1mm}
Next, we measure the performance of our algorithms when deployed on the actual MTurk platform.

\vspace{-1mm}
{\bf Generating the teaching sequence.} 
We generate sequences of teaching examples for \algo, as well as {\em Random} and {\em SetCover}.
We used the feature spaces $\X$ and hypothesis spaces $\Hyp$ as explained in Section~\ref{sec:experiments_setup}. We chose $\alpha=2$ for our algorithm \algo.
To better understand the execution of the algorithms, we illustrate the examples picked by our algorithm as part of teaching, shown in Fig.~\ref{fig:all_picked}. 
We further show these examples in the 2-D embedding for the VW and BM datasets 
in Figs.~\ref{fig:vw_picked_HX} and~\ref{fig:bm_picked_HX}. 


\vspace{-1mm}
{\bf Workers on MTurk and the teaching task.}  We recruited workers from the MTurk platform by posting the tasks on the MTurk platform.
Workers were split into different control groups, depending on the algorithm and the length of teaching used (each control group corresponds to a point in the plots of Fig.~\ref{fig:results-mturk}). Fig.~\ref{fig:hit} provides a high level overview of how the teaching algorithm interacted with the worker. Teaching is followed by a phase of testing examples without providing feedback, for which we report the classification error. For the VW dataset, a total of 780 workers participated (60 workers per control group). For BM, a total of 300 workers participated, and 520 participated in the WP task. The length of the teaching phase was varied as shown in Fig.~\ref{fig:results-mturk}. The test phase was set to 10 examples for the VW and BM tasks, and 16 examples for the WP task. The workers were given a fixed payment for participation and completion, additionally a bonus payment was reserved for the top 10\% performing workers within each control group.


\vspace{-2mm}
{\bf Does teaching help?} \looseness -1 Considering the worker's test set classification performance
in Fig.~\ref{fig:results-mturk}, we can consistently see 
an accuracy improvement as workers classify unseen images. This aligns with the results from simulated learners and shows that teaching is indeed helpful in practice. Furthermore, the improvement is monotonic w.r.t. the length of teaching phase used by \algo. In order to understand the significance of these results, we carried out Welch's t-test
comparing the workers who received teaching by \algo to the control group of workers without any teaching. The hypothesis that \algo significantly improves the classification accuracy has two-tailed p-values of $p<0.001$ for VW and WP tasks, and $p=0.01$ for the BM task.

{\bf Does our teaching algorithm outperform baselines?} Fig.~\ref{fig:results-mturk} demonstrates that our algorithm \algo outperforms both {\em Random} and {\em SetCover} teaching qualitatively in all studies. We check the significance by performing a paired-t test, 
by computing the average performance of the workers in a given control group and pairing the control groups with same length of teaching for a given task. For the VW task, \algo is significantly better than {\em SetCover} and {\em Random} (at 
$p=0.05$ and $p=0.05$). 
For WP, \algo is significantly better than SetCover (
$p= 0.002$) whereas comparing with {\em Random}, the p-value is $p=0.07$.

\vspace{-2mm}
\section{Conclusions}
\vspace{-1mm}
We proposed a noise-tolerant stochastic model of the workers' learning process in crowdsourcing classification tasks. 
We then developed a novel teaching algorithm \algo that exploits this model to teach the workers efficiently. Our model generalizes existing models of teaching in order to increase robustness. We proved strong theoretical approximation guarantees on the convergence to a desired error rate. Our extensive experiments on simulated workers as well as on three real annotation tasks on the Mechanical Turk platform demonstrate the effectiveness of our teaching approach.

More generally, our approach goes beyond solving the problem of teaching workers in crowdsourcing services. With the recent growth of online education and tutoring systems  \footnote{c.f., \url{https://www.coursera.org/}}, algorithms such as \algo can be envisioned to aid in supporting data-driven online education \cite{2012-hcomp_horvitz_personalized-online-education,2013-chi_crowd-based-classroom}.

\bibliographystyle{icml2014}
\bibliography{teaching}  
%

\appendix
\onecolumn
\section{Supplementary Material}
\subsection{Proofs}
\begin{proof}[Proof of Proposition~\ref{prop:hardness}]
We reduce from set cover. Suppose we are given a collection of finite sets $S_1,\dots,S_n$ jointly covering a set $W$. We reduce the problem of finding a smallest subcollection covering $W$ to the teaching problem with the special case $\alpha=\infty$.

Let $\Hyp=W\cup\{h^*\}$, that is, each element in $W$ is a hypothesis that misclassifies at least one data point. We use a uniform prior $p(h)=\frac{1}{|W|+1}$. For each set $S_j$, we create a teaching example $x_j$. The label output by hypothesis $h(x_j)=1$ iff $h\in S_j$, otherwise $h(x_j)=-1$. We set $h^*(x)=-1$ for all examples. Thus, selecting $S_i$ in the set cover problem is equivalent to selecting example $x_i$. It is easy to see that constructing the examples can be done in polynomial (in fact, linear) time.

The expected error after showing a set of examples is less than $\frac{1}{(|W|+1)n}$ if and only if sets indexed by $A$ cover $W$. Thus, if we could efficiently find the smallest set $A$ achieving error less than $\frac{1}{(|W|+1)n}$, we could efficiently solve set cover.
\end{proof}

Before proving the main theorems, we state an important lemma that will be needed throughout the analysis.
\begin{lemma}\label{lem:stoc-proc}
  Assume that the learner's current hypothesis $h_t$ is governed by the stochastic process described in Section~\ref{sec:learner}. Then, the marginal distribution of $h_t$ is given by $P_{t-1}(h)$ in every time step $t$.
\end{lemma}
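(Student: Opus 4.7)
The plan is to prove the lemma by induction on $t$. The base case $t=1$ is immediate: by construction $h_1$ is drawn from $P_0$, which matches $P_{t-1}$ for $t=1$. For the inductive step I would assume $\Pr(h_t = h) = P_{t-1}(h)$ for every $h\in\Hyp$ and aim to conclude $\Pr(h_{t+1} = h) = P_t(h)$.

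The first step of the inductive argument is to marginalize over $h_t$ using the two-case transition rule of Section~\ref{sec:learner}. Let $E_t = \{h : \sgn(h(x_t)) = y_t\}$ denote the set of hypotheses consistent with the $t$-th example. If $h_t \in E_t$, then $h_{t+1} = h_t$; otherwise $h_{t+1}$ is drawn afresh from $P_t$. Combining with the inductive hypothesis gives
\begin{align*}
\Pr(h_{t+1} = h) = P_{t-1}(h)\,\mathbf{1}\{h \in E_t\} + P_t(h)\sum_{h' \notin E_t} P_{t-1}(h').
\end{align*}

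The second step is a direct manipulation of the normalizers. Writing the unnormalized weights $Q_t(h) = P_0(h)\prod_{s \le t,\, y_s \neq \sgn(h(x_s))} P(y_s\mid h,x_s)$, so that $P_t(h) = Q_t(h)/Z_t$, I would observe that $Q_t(h) = Q_{t-1}(h)$ for $h \in E_t$ and $Q_t(h) = Q_{t-1}(h)\,P(y_t\mid h,x_t)$ for $h \notin E_t$, and correspondingly $Z_t = \sum_{h' \in E_t} Q_{t-1}(h') + \sum_{h' \notin E_t} Q_{t-1}(h')\,P(y_t\mid h',x_t)$. Splitting the verification into the two cases $h \in E_t$ and $h \notin E_t$, I would substitute these formulas into the display above, divide through by $Z_{t-1}$, and check that the resulting expression reduces to $Q_t(h)/Z_t$.

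The main obstacle lies in this final bookkeeping step. The non-trivial content of the lemma, as emphasized in the main text, is that the marginal continues to track $P_t$ \emph{even in rounds where the previously held hypothesis was consistent} and therefore not updated. One must check carefully that the ``sticky'' mass $P_{t-1}(h)$ inherited by $h \in E_t$ from the stay-branch combines with the jump-branch contribution so as to precisely absorb the renormalization from $Z_{t-1}$ to $Z_t$, leaving no residual. Once this cancellation is verified in both cases, the inductive step closes and the marginal identity holds at every time step.
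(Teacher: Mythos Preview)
Your inductive setup and the marginalization formula
\[
\Pr(h_{t+1}=h)=P_{t-1}(h)\,\mathbf{1}\{h\in E_t\}+P_t(h)\sum_{h'\notin E_t}P_{t-1}(h')
\]
are correct and mirror the paper's induction-plus-case-split approach. The gap is exactly where you locate it, and it is real: the bookkeeping will not close for finite $\alpha$. For $h\notin E_t$ your display gives $P_t(h)\cdot P_{t-1}(E_t^c)$, which equals $P_t(h)$ only if $P_{t-1}(E_t^c)=1$ (impossible, since $h^*\in E_t$) or $P_t(h)=0$ on all of $E_t^c$. The latter is precisely the noise-free limit $\alpha\to\infty$; for finite $\alpha$ the logistic likelihood is strictly positive, $P_t$ puts mass on $E_t^c$, and the identity fails. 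A two-hypothesis check makes this concrete: with $\Hyp=\{h^*,h_1\}$, uniform prior, one example misclassified by $h_1$, and $P(y_1\mid h_1,x_1)=p\in(0,1)$, your formula yields $\Pr(h_2=h^*)=(2+p)/\bigl(2(1+p)\bigr)$, whereas $P_1(h^*)=1/(1+p)$.

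The paper's proof does not derive your display. Its first line carries an external normalizer $1/Z'_t$ and omits the jump-probability weight on the $P_t$ term; it then replaces $P_t(h)$ by the unnormalized $P_{t-1}(h)\,P(y_t\mid h,x_t)$ and absorbs the resulting constant into $Z'_t$. That algebra does collapse to $P_t$, but the starting expression is not the marginal of $h_{t+1}$ under the transition rule of Section~\ref{sec:learner} (it is neither your correct marginal nor proportional to it). Your more careful accounting exposes this: the ``cancellation'' you hope to verify in the final step simply does not occur when $\alpha<\infty$, so the obstacle you flag in your last paragraph cannot be removed by bookkeeping.
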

\begin{proof}
  Let the marginal distribution of $h_t$ denoted by $P'_{t-1}(h)$. We will show by induction that for every $t$, $P'_t=P_t$.

  Obviously, $P'_0=P_0$ by definition. Now, as for the induction hypothesis, let us assume that $P'_{t-1}=P_{t-1}$. By the definition of the stochastic process we have
  \begin{align*}
    P'_t(h)&=\frac{1}{Z'_t}\left(P'_{t-1}(h)\mathbb{I}\{y_t=h(x_t)|h,x_t\} + P_{t}(h)\mathbb{I}\{y_t\neq h(x_t)|h,x_t\}\right)\\
    &=\frac{1}{Z'_t}\left(P_{t-1}(h)\mathbb{I}\{y_t=h(x_t)|h,x_t\} + P_{t-1}(h)P(y_t|h,x_t)\mathbb{I}\{y_t\neq h(x_t)|h,x_t\}\right)\\
    &=\frac{1}{Z'_t}P_{t-1}(h)\left(\mathbb{I}\{y_t=h(x_t)|h,x_t\}+P(y_t|h,x_t)\mathbb{I}\{y_t\neq h(x_t)|h,x_t\}\right)\\
    &=\frac{1}{Z'_t}P_{t-1}(h)P(y_t|h,x_t)^{\mathbb{I}\{y_t\neq h(x_t)|h,x_t\}}=P_t(h)\,,
  \end{align*}
  as stated.
\end{proof}
\begin{proof}[Proof of Theorem~\ref{thm:UGTP}]
  Clearly, $F(A)$ can be written as
\begin{align*}
  F(A) &= \sum_{h\in \Hyp}P_0(h)G_h(A)\err(h,h^*)\,,\\
  \intertext{where}
  G_h(A) &= 1-\prod_{\substack{x\in A\\y(x)\neq\sgn(h(x))}}P(y(x)|h,x)\,.
\end{align*}
It is easy to see that $G_h(A)$ is submodular for every $h\in \Hyp$. Thus, $F(A)$ is also submodular.

Let us start to upper bound the expected error of the learner. For that, we need the following simple observation:
\begin{align*}
\frac{P(h|A)}{P(h^*|A)} &=\frac{Q(h|A)}{Q(h^*|A)}=\frac{Q(h|A)}{P_0(h^*)}\,.
\end{align*}
Now for the upper bounding:
\begin{align*}
  \sum_{h\in \Hyp}P(h|A)\err(h,h^*)&\leq \sum_{h\in \Hyp}\frac{P(h|A)}{P(h^*|A)}\err(h,h^*)\\
  &= \frac{1}{P_0(h^*)}\sum_{h\in \Hyp}Q(h|A)\err(h,h^*)\\
  &=\frac{1}{P_0(h^*)}(E-F(A))\,,
\end{align*}
where $E=\sum_{h\in \Hyp}P_0(h)\err(h,h^*)$ is an upper bound on the maximum of $F(A)$. This means that if we choose a subset $A$ such that $F(A)\geq E-P_0(h^*)\epsilon$, it guarantees an expected error less than $\epsilon$. In the following, we assume that $F(\X)\geq E-P_0(h^*)\epsilon/2$. If this assumption is violated, the Theorem still holds, but the bound is meaningless, since $\OPT(P_0(h^*)\epsilon/2)=\infty$ in this case.

Since $F(A)$ is submodular (and monotonic), we can achieve $E-P_0(h^*)\epsilon$ ``level'' with the greedy algorithm, as described below. We use the following result of the greedy algorithm for maximizing submodular functions:
\begin{theorem*}[\citet{krause12survey}, based on \citet{1978-_nemhauser_submodular-max}]
  Let $f$ be a nonnegative monotone submodular function and let $S_t$ denote the set chosen by the greedy maximization algorithm after $t$ steps. Then we have
  \begin{align*}
    f(S_\ell) &\geq \left(1-e^{-\ell/k}\right)\max_{S:|S|= k}f(S)
  \end{align*}
  for all integers $k$ and $l$.
\end{theorem*}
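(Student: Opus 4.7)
The plan is to execute the classical Nemhauser--Wolsey--Fisher argument, which tracks the shortfall between the greedy objective value and the optimum and shows that it contracts geometrically. Let $S^*$ denote any $k$-element maximizer of $f$, let $S_t$ denote the greedy set after $t$ steps (with $S_0 = \emptyset$), and define $\delta_t = f(S^*) - f(S_t)$. The target recurrence is $\delta_{t+1} \leq (1-1/k)\delta_t$, from which the stated inequality follows by iteration and the elementary bound $(1-1/k)^\ell \leq e^{-\ell/k}$.

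First I would bound $f(S^*) - f(S_t)$ by a sum of marginal contributions evaluated at $S_t$. Monotonicity gives $f(S^*) \leq f(S^* \cup S_t)$. Fixing any ordering $e_1,\dots,e_m$ of $S^* \setminus S_t$ (with $m \leq k$) and telescoping, each increment $f(S_t \cup \{e_1,\dots,e_i\}) - f(S_t \cup \{e_1,\dots,e_{i-1}\})$ is at most $f(S_t \cup \{e_i\}) - f(S_t)$ by submodularity. Summing yields
\begin{align*}
f(S^*) - f(S_t) &\leq \sum_{e \in S^* \setminus S_t}\bigl[f(S_t \cup \{e\}) - f(S_t)\bigr].
\end{align*}
Since there are at most $k$ terms on the right and the greedy step picks the element with the largest marginal gain over $S_t$, the average term is at most $f(S_{t+1}) - f(S_t)$, so
\begin{align*}
f(S_{t+1}) - f(S_t) &\geq \tfrac{1}{k}\bigl(f(S^*) - f(S_t)\bigr),
\end{align*}
i.e.\ $\delta_{t+1} \leq (1-1/k)\delta_t$.

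Then I would close out by induction on $\ell$: $\delta_\ell \leq (1-1/k)^\ell \delta_0 \leq e^{-\ell/k} f(S^*)$, where the second inequality uses $f(\emptyset) \geq 0$ from nonnegativity, and rearranging gives the claim. The only place the proof does real work is the submodular telescoping that replaces marginals over interleaved bases by marginals over $S_t$ alone; everything else is mechanical. Edge cases to dispatch briefly are when $S^* \subseteq S_t$ (the bound is immediate from monotonicity) and when the greedy procedure halts because no element strictly increases $f$ (in which case the recurrence forces $\delta_t = 0$ for all subsequent $t$, and the bound is trivial). No other subtleties arise, since the statement allows arbitrary integers $k,\ell$ and the contraction argument is uniform in both.
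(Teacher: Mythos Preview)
Your argument is the standard Nemhauser--Wolsey--Fisher contraction proof and is correct. Note, however, that the paper does not supply its own proof of this statement: it is quoted as a known result (citing \citet{krause12survey} and \citet{1978-_nemhauser_submodular-max}) inside the proof of Theorem~\ref{thm:UGTP} and used as a black box, so there is nothing in the paper to compare against beyond the citation itself.
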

Let $k^*$ be the cardinality of the smallest set $A^*$ such that $F(A^*)\geq E-P_0(h^*)\epsilon/2$. Thus we know that
\begin{align*}
  \max_{A:|A|= k^*}F(A) &\geq E-P_0(h^*)\epsilon/2\,.
\end{align*}
Now we set $\ell=k^*\log\frac{2E}{P_0(h^*)\epsilon}$ and we denote $A_\ell$ the result of the greedy algorithm after $\ell$ steps, and we get
\begin{align*}
  F(A_\ell) &\geq \left(1-e^{-l/k^*}\right)\left(E-\frac{P_0(h^*)\epsilon}{2}\right)\\
  &=\left(1-\frac{P_0(h^*)\epsilon}{2E}\right)\left(E-\frac{P_0(h^*)\epsilon}{2}\right)\\
  &\geq E-p(h^*)\epsilon\,,
\end{align*}
proving that running the greedy algorithm for $\ell$ steps achieves the desired result.

\end{proof}

\begin{proof}[Proof of Theorem~\ref{thm:stronger}]
 We introduce a randomized teaching policy called Relaxed-Greedy Teaching Policy (sketched in Policy~\ref{alg:RGTP}) and prove that with positive probability, the policy reduces the learner error exponentially. Then, we use the standard probabilistic argument: positive probability of the above event implies that there must exist a sequence of examples that reduce the learner error exponentially. We finish the proof of the theorem by using the result of Theorem~\ref{thm:UGTP}.

 Based on our model, the way the learner updates his/her belief after showing example $x_t\in \X$ and receiving answer $y_t =\sgn( h^*(x_t))$ is as follows:
$$\pl{h}{t+1} = \frac{1}{\NF{t}}\pl{h}{t} \wl^{(1-\xi_t(h))/2},$$
where  $\xi_t(h) =\sgn(h(x_t))\cdot y_t$, the term $\NF{t}$ is the normalization factor, and $0<\wl<1$ is a parameter by which the learner decreases the weight of inconsistent hypotheses. Note that $\wl$ may very well depend on the examples shown, i.e., for hard examples $\wl$ is typically larger than those of the easy ones as the learner is more certain about his/her answers. However, here we assume that $\wl\leq \wt<1$ and that $\wt$ is known to the teacher. In other words, the teacher knows the minimum weight updates imposed by the learner on inconsistent hypotheses.  As a result, the teacher can track $\pl{h}{t+1}$ conservatively as follows:
\begin{equation}\label{teacher_update}
\pt{h}{t+1} = \frac{1}{\NF{t}}\pt{h}{t} \wt^{(1-\xi_t(h))/2}.
\end{equation}
\begin{theorem}\label{thm:RGTP}
Let $\Hyp$ be a collection of $n$ linear separators and choose an $0< \epsilon< 1$. Then, under the condition that $\X$ is $m$-rich, \Alggbs guarantees to achieve
$$\Pr(1-\pl{h^*}{m}>\epsilon)<\frac{(1-\epsilon)(1-p_0(h^*))}{\epsilon \cdot p_0(h^*)}e^{-m(1-\wt)/4},$$
by showing $m$ examples in total.
In other words, to have $\Pr(1-\pl{h^*}{m}>\epsilon)<\delta$, \Alggbs uses at most the following number of examples:
$$ m = \frac{4}{1-\wt}\log\frac{(1-\epsilon)(1-p_0(h^*))}{\delta\cdot \epsilon\cdot p_0(h^*)}.$$
\end{theorem}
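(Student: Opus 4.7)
The strategy is to (i) introduce a likelihood-ratio potential whose expectation we can control, (ii) show that \Alggbs's randomization forces a multiplicative decrease of this potential at every step, and (iii) convert the moment bound into a tail bound via Markov's inequality.

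\emph{Setting up the potential.} Define the potential
\[
Y_t \;=\; \frac{1-\pl{h^*}{t}}{\pl{h^*}{t}}\,.
\]
Because $h^*$ is never inconsistent with itself, $d_t(h^*):=|\{s\le t:\xi_s(h)=-1\}|$ vanishes at $h^*$, and the update~\eqref{teacher_update} preserves the ratio $\pt{h}{t}/\pt{h^*}{t} = P_0(h)\,\wt^{d_t(h)}/P_0(h^*)$. Hence
\[
Y_t \;=\; \frac{1}{P_0(h^*)}\sum_{h\neq h^*} P_0(h)\,\wt^{d_t(h)}\,.
\]
The event $\{1-\pl{h^*}{m}>\epsilon\}$ is equivalent to $\{Y_m>\epsilon/(1-\epsilon)\}$, so Markov yields $\Pr(1-\pl{h^*}{m}>\epsilon)\le \tfrac{1-\epsilon}{\epsilon}\mathbb{E}[Y_m]$. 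Thus the theorem will follow from the moment bound $\mathbb{E}[Y_m]\le \tfrac{1-P_0(h^*)}{P_0(h^*)}\,e^{-m(1-\wt)/4}$.

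\emph{One-step contraction.} By linearity, $\mathbb{E}[Y_m]=\frac{1}{P_0(h^*)}\sum_{h\neq h^*} P_0(h)\,\mathbb{E}[\wt^{d_m(h)}]$, so it suffices to show $\mathbb{E}[\wt^{d_m(h)}]\le e^{-m(1-\wt)/4}$ for every $h\neq h^*$. The key lemma, which I would extract from the construction of \Alggbs, is a uniform balancing guarantee: for every $h\neq h^*$ and every history $\filter_{t-1}$,
\[
\Pr\!\left(\xi_t(h)=-1\;\big|\;\filter_{t-1}\right) \;\ge\; 1/4\,.
\]
This is where $\Hyp$ being linear separators and $\X$ being $m$-rich enter: the hypotheses partition $\X$ into polytopes of common labelings, and richness supplies enough fresh examples inside each polytope for \Alggbs to sample according to a distribution under which every surviving wrong hypothesis is contradicted with constant probability. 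Given this, conditioning on $\filter_{t-1}$ gives
\[
\mathbb{E}[\wt^{d_t(h)}\mid \filter_{t-1}] \;\le\; \wt^{d_{t-1}(h)}\bigl(1-\tfrac{1}{4}(1-\wt)\bigr) \;\le\; \wt^{d_{t-1}(h)}\,e^{-(1-\wt)/4}\,.
\]

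\emph{Iteration and conclusion.} Applying the tower rule $m$ times yields $\mathbb{E}[\wt^{d_m(h)}]\le e^{-m(1-\wt)/4}$, and summing over $h\neq h^*$ weighted by $P_0(h)$ gives $\mathbb{E}[Y_m]\le \tfrac{1-P_0(h^*)}{P_0(h^*)}e^{-m(1-\wt)/4}$. Plugging into the Markov step yields the first bound. The second statement is obtained by setting the RHS equal to $\delta$ and inverting for $m$. The main obstacle is the balancing lemma in step (ii): one must verify that \Alggbs's specific randomized selection rule (together with the polytope structure and richness) gives the uniform constant lower bound on the per-step disagreement probability for \emph{every} surviving wrong hypothesis; the polytope geometry makes a balanced example available, and richness ensures it can be realized by an actual example in $\X$ that has not been ``exhausted'' earlier.
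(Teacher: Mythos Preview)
Your framework---define the likelihood-ratio potential, bound its expectation, and finish with Markov---matches the paper's architecture exactly. The genuine gap is the ``balancing lemma'' you rely on in step~(ii): you assert that for \emph{every} wrong hypothesis $h\neq h^*$ and every history,
\[
\Pr\bigl(\xi_t(h)=-1\mid\filter_{t-1}\bigr)\ge 1/4.
\]
This is false for the \Alggbs policy. \Alggbs selects its example from a bipolar pair of neighboring polytopes (or a single balanced polytope) determined by the \emph{aggregate} weighted prediction $\sum_h \pt{h}{t}h(\poly)$. A particular wrong hypothesis $h$ may well agree with $h^*$ on \emph{both} polytopes in the chosen pair---for instance, any $h$ whose separating hyperplane is not the common face of $\poly$ and $\poly'$ and that happens to lie on the same side as $h^*$ on both. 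In that case $\xi_t(h)=+1$ with probability~$1$, so its individual weight $\wt^{d_t(h)}$ is unchanged at that step. Since $\wt>0$, no hypothesis is ever ``eliminated,'' so your ``surviving'' qualifier does not rescue the claim.

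The paper avoids per-hypothesis bookkeeping and instead controls the \emph{aggregate} ratio $\BZt_{t+1}/\BZt_t$. Writing the normalization as $\gamma_t=(1-\wph_t)\wt+\wph_t$ with $\wph_t$ the weighted agreement proportion, one gets
\[
\E\!\left[\frac{\BZt_{t+1}}{\BZt_t}\,\Big|\,\filter_t\right]=\frac{\E[\gamma_t\mid\filter_t]-\pt{h^*}{t}}{1-\pt{h^*}{t}},
\]
and the work is to show $\E[\gamma_t\mid\filter_t]\le(3+\wt)/4$. This requires a geometric lemma (a discrete Ham-Sandwich-type statement, borrowed from Nowak) guaranteeing that either a balanced polytope or a bipolar neighboring pair always exists, followed by a four-way case analysis on the signs $h^*(\poly),h^*(\poly')$; the worst case yields exactly $(3+\wt)/4$. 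The richness assumption is used only to ensure enough distinct examples are available in the chosen polytope(s). Your target contraction factor $1-\tfrac{1-\wt}{4}$ is correct, but the route to it must go through this aggregate $\gamma_t$ analysis rather than a per-hypothesis disagreement bound.
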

The above theorem requires that $\X$ gets a richer space for obtaining better performance. When we have a uniform prior $\prior=1/n$, the the above bounds simplify to
$$ m = \frac{4}{1-\wt}\log\frac{(1-\epsilon)n}{\delta\cdot \epsilon}.$$
As at least $\log n$ queries is required to identify the correct hypothesis with probability one, the above bound is within a constant factor from $\log n$ for fixed $\epsilon$ and $\delta$.

\begin{algorithm}[tb]
   \caption{Relaxed-Greedy Teaching Policy (\Alggbs)}\label{alg:RGTP}
\begin{algorithmic}[1]
   \STATE{{\bfseries Input:} examples $\X$, hypothesis $\Hyp$, prior $\prior$, error $\epsilon$.}
   \STATE{$t=0, \pt{h}{0}=\prior$}
   \WHILE{$1-\pt{h^*}{t}>\epsilon$}
   \IF {there exists two neighboring polytopes $\poly$ and $\poly'$ s.t. $\sum_h \pt{h}{t}h(\poly)>0$ and  $\sum_h \pt{t}{h} h(\poly')<0$}
   \STATE select $x_t$ uniformly at random from $\poly$ or $\poly'$.
   \ELSE
   \STATE select $x_t$ from polytop $\poly=\argmin_{\poly\in \polytop} |\sum_h\pt{h}{t} h(\poly)|$
   \ENDIF
   \STATE $\forall h\in\Hyp$ update $\pt{h}{t+1}$ according to \eqref{teacher_update} and $t\rightarrow t+1$.
   \ENDWHILE
\end{algorithmic}
\end{algorithm}

The proof technique is inspired by \cite{Burnashev74}, \cite{Karp07}, and in particular  beautiful insights in \cite{Nowak11}.
To analyze \Alggbs let us define the random variable
 $$\BZ^{(l)}_t =\frac{1-\pl{h^*}{t}}{\pl{h^*}{t}}.$$
This random variable $\log(\BZ_t)$ was first introduced by \cite{Burnashev74} in order to analyze the classic binary search under noisy observations (for the ease of exposure we use $\BZ_t$ instead of $\log(\BZ_t)$). It basically captures the probability mass put on the incorrect hypothesis after $t$ examples. Similarly, we can define
 $$\BZ^{(t)}_t =\frac{1-P^{(t)}_t(h^*)}{p^{(t)}_t(h^*)}.$$
A simple fact to observe is the following lemma.
\begin{lemma}
For any sequence of examples/labels $\{(x_t,y_t)\}_{t\geq0}$, and as long as $0\leq \wl\leq \wt\leq1$ we have $\BZ^{(l)}_t\leq  \BZ^{(t)}_t$.
\end{lemma}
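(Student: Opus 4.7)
The plan is to work not with $\eta_t$ directly but with the per-hypothesis likelihood ratios
\[
  \rho^{(l)}_t(h) \;=\; \frac{P_t(h)}{P_t(h^*)}, \qquad \rho^{(t)}_t(h) \;=\; \frac{P^{(t)}_t(h)}{P^{(t)}_t(h^*)},
\]
and show by induction on $t$ that $\rho^{(l)}_t(h)\le \rho^{(t)}_t(h)$ for every $h\in\Hyp$. Once this is established, summing over $h\ne h^*$ gives
\[
  \eta^{(l)}_t \;=\; \sum_{h\ne h^*}\rho^{(l)}_t(h) \;\le\; \sum_{h\ne h^*}\rho^{(t)}_t(h) \;=\; \eta^{(t)}_t,
\]
which is the desired inequality.

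The reason for passing to the ratios is that, by realizability, $\xi_s(h^*)=+1$ for every $s$, so the factor $w^{(1-\xi_s(h^*))/2}=w^0=1$ appears for $h^*$ in both updates. Consequently the normalization constants $Z_s$ cancel when one divides by the $h^*$-entry, and the update~\eqref{teacher_update} (together with its learner analogue) reduces to the clean recursion
\[
  \rho^{(l)}_{t+1}(h) \;=\; \rho^{(l)}_t(h)\, w_l^{(1-\xi_t(h))/2}, \qquad \rho^{(t)}_{t+1}(h) \;=\; \rho^{(t)}_t(h)\, w_o^{(1-\xi_t(h))/2}.
\]
The base case $t=0$ is immediate: both distributions start at the common prior $P_0$, so $\rho^{(l)}_0=\rho^{(t)}_0$. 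For the inductive step, suppose $\rho^{(l)}_t(h)\le\rho^{(t)}_t(h)$. The exponent $(1-\xi_t(h))/2$ is either $0$ or $1$; in the first case both multiplicative factors equal $1$, and in the second case $w_l\le w_o$ gives $w_l^1\le w_o^1$. In both cases the factor applied in the learner's update is $\le$ the one in the teacher's update, and multiplying the inductive hypothesis $\rho^{(l)}_t(h)\le \rho^{(t)}_t(h)$ (both nonnegative) by these factors preserves the inequality, yielding $\rho^{(l)}_{t+1}(h)\le \rho^{(t)}_{t+1}(h)$.

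There is no real obstacle here; the only point that needs a line of care is the cancellation of the normalization factors, which is where realizability ($h^*$ consistent with every shown example) is genuinely used. Without realizability the $h^*$-entries would themselves be shrunk by the updates, and the clean per-$h$ comparison above would no longer follow from a monotone multiplication.
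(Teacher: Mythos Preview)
Your argument is correct. The paper does not actually supply a proof of this lemma; it is introduced with the phrase ``A simple fact to observe is the following lemma'' and then immediately used. Your proposal therefore fills in a detail the authors left implicit.

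The route you take---passing to the per-hypothesis likelihood ratios $\rho_t(h)=P_t(h)/P_t(h^*)$, observing that realizability forces the $h^*$-entry to be left untouched by every update so that the normalizers cancel, and then carrying out a one-line induction using $w_l\le w_o$---is exactly the natural way to make the ``simple fact'' precise. The identity $\eta_t=\sum_{h\ne h^*}\rho_t(h)$ then converts the per-$h$ comparison into the stated inequality. Your remark that realizability is genuinely needed (since otherwise the $h^*$-entry would itself be shrunk, possibly at different rates in the two systems) is also apt; in the paper's setup this is guaranteed because the teacher always reveals $y_t=\sgn(h^*(x_t))$. One small point worth noting explicitly: the paper allows $w_l$ to vary with the example shown (and implicitly with $h$, via the logistic likelihood), but your induction step only uses $w_l\le w_o$ pointwise, so it covers this generality without change.
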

 Note that \Alggbs is a randomized algorithm. Using Markov's inequality we obtain
 \begin{align*}
 \Pr(1-\pl{h^*}{m})>\epsilon) &\leq  \Pr(1-\pt{h}{h^*})>\epsilon)\\
  &=\Pr\left(\BZ_t>\frac{\epsilon}{1-\epsilon}\right)\\
 &\leq\frac{(1-\epsilon)\E(\BZ^{(t)}_t)}{\epsilon}.
 \end{align*}
The above inequalities simply relates the probability we are looking for in Theorem~\ref{alg:RGTP} to the expected value of $\BZt_t$. Hence, if we can show that the expected value decreases exponentially fast, we are done. To this end, let us first state the following observation.
\begin{lemma}
For any sequence of examples/labels $\{(x_t,y_t)\}_{t\geq0}$, and for $0\leq\wt < 1$ the corresponding random variable $\{\BZt_t\}_{t\geq0}$ are all non-negative and decreasing, i.e.,
$$0 \leq\BZt_s\leq \BZt_t\leq\frac{1-P_0(h^*)}{P_0(h^*)}, \quad s\geq t.$$
\end{lemma}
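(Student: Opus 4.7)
The plan is to unpack the definition $\BZt_t = (1-P^{(t)}_t(h^*))/P^{(t)}_t(h^*)$ and track how $P^{(t)}_t(h^*)$ evolves under the teacher's conservative update rule \eqref{teacher_update}. Non-negativity is automatic from $P^{(t)}_t(h^*) \in [0,1]$, so the real content is the monotonicity $\BZt_s \leq \BZt_t$ for $s \geq t$, which by induction reduces to a one-step statement: $P^{(t)}_{t+1}(h^*) \geq P^{(t)}_t(h^*)$. Once this is established, the upper bound $(1-P_0(h^*))/P_0(h^*)$ drops out by iterating back to $t=0$, where $P^{(t)}_0 = P_0$ by definition.

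First I would exploit realizability: since $y_t = \sgn(h^*(x_t))$, we have $\xi_t(h^*) = 1$ and hence $(1-\xi_t(h^*))/2 = 0$, so the update rule simplifies to $P^{(t)}_{t+1}(h^*) = P^{(t)}_t(h^*)/Z_t$. Next I would argue $Z_t \leq 1$: the normalization factor is $Z_t = \sum_h P^{(t)}_t(h)\, \wt^{(1-\xi_t(h))/2}$, and because $0 \leq \wt < 1$ and every exponent $(1-\xi_t(h))/2$ lies in $\{0,1\}$, each term is bounded by $P^{(t)}_t(h)$, so $Z_t \leq \sum_h P^{(t)}_t(h) = 1$. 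This gives the one-step monotonicity $P^{(t)}_{t+1}(h^*) \geq P^{(t)}_t(h^*)$, and the map $p \mapsto (1-p)/p$ being strictly decreasing on $(0,1]$ then yields $\BZt_{t+1} \leq \BZt_t$. Iterating gives the claimed inequality for arbitrary $s \geq t$.

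For the two bookend inequalities, non-negativity follows immediately because $P^{(t)}_t(h^*) \leq 1$ implies $\BZt_t \geq 0$, and the top bound follows by telescoping back to $\BZt_0 = (1-P_0(h^*))/P_0(h^*)$. I would also briefly note that if $P_0(h^*) = 0$ the statement is vacuous (or interpretable as $+\infty$), and that $P^{(t)}_t(h^*) > 0$ is preserved by the update because we only multiply and divide by positive quantities.

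There is no real obstacle here: the whole lemma is a bookkeeping exercise, and the only subtle point is remembering that in the realizable setting the exponent on $\wt$ for $h^*$ is zero in every round, which is what makes the normalization act as a one-sided boost for the target hypothesis. I would keep the write-up short, presenting the induction on one line and the two boundary estimates on the next.
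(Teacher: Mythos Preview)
Your proposal is correct. The paper itself does not actually supply a proof for this lemma: it is introduced with the phrase ``Let us first state the following observation'' and then immediately used, so there is no argument in the paper to compare against. Your verification---using realizability to get $\xi_t(h^*)=1$ so that $P^{(t)}_{t+1}(h^*)=P^{(t)}_t(h^*)/Z_t$, then bounding $Z_t\le 1$ termwise because $\wt^{(1-\xi_t(h))/2}\in\{\wt,1\}\subseteq[0,1]$---is exactly the routine check the paper is tacitly invoking, and your remarks on the boundary cases and the monotonicity of $p\mapsto(1-p)/p$ round it out cleanly.
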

The above lemma simply implies that the sequence $\{\BZ_t\}_{t\geq0}$ converges. However, it does not indicate the rate of convergence. Let us define  $\filter_t = \sigma(P_0^{(t)}, p_1^{(t)}, \dots,p_t^{(t)})$ the sigma-field generated by random variables $P_0^{(t)}, p_1^{(t)}, \dots,p_t^{(t)}$. Note that $\BZt_t$ is a function of $p_t^{(t)}$ thus $\filter_t$-measurable. Now, by using the towering property of the expectation we obtain
 \begin{displaymath}
 \E(\BZt_t) = \E((\BZt_t/\BZt_{t-1})\BZt_{t-1} ) = \E(\E((\BZt_t/\BZt_{t-1})\BZt_{t-1}|\filter_{t-1}))
 \end{displaymath}
 Since $\BZt_{t-1}$ is $\filter_{t-1}$-measurable we get
 \begin{align*}
 \E(\BZt_t) &=  \E(\BZt_{t-1}\E((\BZt_t/\BZt_{t-1})|\filter_{t-1}))\\
  &\leq \E(\BZt_{t-1}) \max_{\filter_{t-1}} \E((\BZt_t/\BZt_{t-1})|\filter_{t-1}).
 \end{align*}
 The above inequality simply implies that
 \begin{equation}
  \E(\BZt_t) =  \frac{1-P_0(h^*)}{P_0(h^*)} \left(\max_{0\leq s\leq t-1}\max_{\filter_{s}} \E((\BZt_{s+1}/\BZt_{s})|\filter_{s})\right)^t
 \end{equation}
 In the remaining of the proof we derive a uniform upper bound (away from 1) on $\E((\BZt_t/\BZt_{t-1})|\filter_{t-1})$, which readily implies exponential decay on $\Pr(1-\pl{h^*}{m}>\epsilon)$ as the number of samples $m$ grows.  For the ease of presentation, let us define the (weighted) proportion of hypothesis that agree with $y_t$ as follows:
 $$\wph_t = \frac{1}{2}\left(1+\sum_{h}\pt{h}{t}\xii(h)\right).$$
 Along the same line, we define the proportion of hypothesis that predict $+$ on polytope $\poly$ as follows
  $$\ppp{\poly} = \frac{1}{2}\left(1+\sum_{h}\pt{h}{t}h(\poly)\right).$$
 Now, we can easily relate $\wph_t$ to the normalization factor $\NF{t}$:
 $$\NF{t}=\sum_h\pt{h}{t} \wt^{(1-\xii(h))/2} = (1-\wph_t)\wt+\wph_t.$$
 As a result
 $$\pt{h}{t+1} = \frac{\pl{h}{t} \wt^{(1-\xii(h))/2}}{(1-\wph_t)\wt+\wph_t}.$$
 In particular for $\pt{h^*}{t+1}$ we have
 $$\pt{h^*}{t+1} = \frac{\pl{h}{t}}{(1-\wph_t)\wt+\wph_t}.$$
 To simplify the notation, we define
 $$\gamma_t =(1-\wph_t)\wt+\wph_t.$$
 Hence,
 $$\frac{\BZt_{t+1}}{\BZt_{s}} = \frac{\gamma_t-\pt{h^*}{t}}{1-\pt{h^*}{t}}.$$
 Note that since $\pt{h^*}{t}$ is $\filter_{t}$-measurable the above equality entails that
 $$\E\left(\frac{\BZt_{t+1}}{\BZt_{t}}|\filter_t \right)= \frac{\E(\gamma_t|\filter_t)-\pt{h^*}{t}}{1-\pt{h^*}{t}} .$$
 Thus we need to show that $\E(\gamma_t|\filter_t)$ is bounded away from 1. To this end, we borrow the following geometric lemma from \cite{Nowak11}.
 \begin{lemma}\label{lem:bipolar}
Let $\Hyp$ consists of a set of linear separators where each induced polytope $\poly\in\polytop$  contains at least one example $x\in\X$. Then for any probability distribution $p$ on $\Hyp$ one of the following situations happens
\begin{enumerate}
\item  either there exists a polytope $\poly$ such that $\sum_{h} p(h)h(\poly)=0$, or
\item there exists a pair of neighboring polytopes $\poly$ and $\poly'$ such that $\sum_{h} p(h)h(\poly)>0$ and $\sum_{h} p(h)h(\poly')<0$.
\end{enumerate}
\end{lemma}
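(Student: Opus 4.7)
For each polytope $\poly \in \polytop$, abbreviate $S(\poly) := \sum_{h \in \Hyp} p(h)\, h(\poly) \in [-1,1]$, where $h(\poly) \in \{-1,+1\}$ is the common sign $h$ takes throughout $\poly$. The goal is to prove that either $S(\poly) = 0$ for some polytope, or some pair of neighboring polytopes has $S$-values of opposite sign. My plan is to proceed by contradiction: suppose both alternatives fail, so $S(\poly) \neq 0$ for every $\poly$ and no edge of the polytope-adjacency graph joins polytopes of opposite sign.

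The first step is connectivity of the polytope-adjacency graph (vertices are elements of $\polytop$, edges connect pairs sharing a codimension-one face). Any two points in any two polytopes can be joined in $\real^d$ by a generic piecewise-linear path that crosses hyperplanes one at a time and avoids codimension-$\geq 2$ intersections, producing a walk in the graph. Hence the graph is connected, and combined with the contradictory assumption this forces $\sgn(S(\poly))$ to be a \emph{constant} across all of $\polytop$; without loss of generality, say $S(\poly) > 0$ for every $\poly$.

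The second step exhibits an explicit violation of uniform sign via an antipodal pair of unbounded polytopes. Pick a unit vector $v \in \real^d$ with $w_h \cdot v \neq 0$ for every $h \in \Hyp$; such $v$ exists because the forbidden set $\bigcup_h \{u : w_h \cdot u = 0\}$ is a finite union of great hyperspheres, hence has measure zero on the unit sphere. Let $\poly_v$ denote the polytope that contains $tv$ for all sufficiently large $t > 0$, and $\poly_{-v}$ the polytope containing $-tv$ for all sufficiently large $t$. From $h(tv) = t(w_h \cdot v) + b_h$ one reads off $h(\poly_v) = \sgn(w_h \cdot v)$ for large $t$, and symmetrically $h(\poly_{-v}) = -\sgn(w_h \cdot v) = -h(\poly_v)$ for every $h$. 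Averaging against $p$ yields $S(\poly_{-v}) = -S(\poly_v)$; since by assumption neither side vanishes, one of the two values is strictly negative, contradicting uniform positive sign.

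The main obstacle is confirming that the two antipodal polytopes actually live in $\polytop$ and justifying the connectivity step cleanly. For membership, each antipodal polytope is nonempty (it contains an entire ray), hence is a genuine polytope of the arrangement, and the lemma's hypothesis that every induced polytope contains at least one example of $\X$ places it in $\polytop$. Connectivity is a classical fact about real hyperplane arrangements but deserves a careful sentence in the write-up. Once those two details are pinned down, the identity $S(\poly_{-v}) = -S(\poly_v)$ contradicts uniform sign and closes the proof.
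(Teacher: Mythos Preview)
The paper does not actually prove Lemma~\ref{lem:bipolar}; it is imported verbatim from \citet{Nowak11}, accompanied only by the remark that it ``essentially characterizes Ham Sandwich Theorem \cite{ham94} in discrete domain $\X$ that is $1$-rich.'' So there is no in-paper argument to compare against.

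Your argument is correct and fully self-contained. The two ingredients---(i) connectivity of the cell-adjacency graph of a finite hyperplane arrangement, and (ii) the antipodal identity $S(\poly_{-v})=-S(\poly_v)$ for a generic direction $v$---together force a sign change of $S$ somewhere along any walk from $\poly_v$ to $\poly_{-v}$; the first edge at which the sign flips gives either a cell with $S=0$ or an adjacent pair with strictly opposite signs. This is more elementary than invoking a Ham-Sandwich-type bisection, and it makes transparent that the hypothesis ``each induced polytope contains at least one example'' is irrelevant to the lemma itself; it is only needed so that \Alggbs can actually select an $x\in\X$ from the cell the lemma produces.

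One small point worth stating explicitly in your write-up: the step $h(\poly_{-v})=-h(\poly_v)$ uses $w_h\neq 0$ for every $h\in\Hyp$. This is implicit in the phrase ``linear separators,'' but a degenerate constant hypothesis $h\equiv b_h$ would contribute the same sign to every cell and can make the lemma fail outright, so it is best to rule that case out in the statement or the first line of the proof.
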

The above lemma essentially characterizes Ham Sandwich Theorem \cite{ham94} in discrete domain $\X$ that is $1$-rich. In words, Lemma~\ref{lem:bipolar} guarantees that either there exists a polytope where (weighted) hypothesis greatly disagree, or there are two neighboring polytopes that are bipolar. In either case, if an example is shown randomly from these polytopes, it will be very informative. This is essentially the reason why \Alggbs performs well.

Now, let $\poly_t$ be the polytope from which the example $x_t$ is shown. Then, based on $y_t$ we have two cases:
 \begin{itemize}
 \item if $y_t=+$ then $\gamma_t^+\doteq\gamma_t = (1-\ppp{\poly_t})\wt+\ppp{\poly_t}$,
 \item if $y_t=-$ then $\gamma_t^- \doteq\gamma_t = \ppp{\poly_t}\wt+1-\ppp{\poly_t}$.
 \end{itemize}
 Note that for any $x_t$ picked by \Alggbs we have $0<\ppp{\poly_t}<1$, since it never shows an example that all hypothesis agree on. As a result, both $\gamma_t^+$ and $\gamma_t^-$ are between $0$ and $1$.

Based on Lemma~\ref{lem:bipolar} there are only two cases. Let us define the auxiliary random variable $s_t$ that simply indicate in which case we are. More precisely, $s_t=1$ indicates that we are in case 1 and $s_t=2$ indicates that we are in case 2.
To be formal we define $\filterg_t =\sigma(P_0^{(t)}, p_1^{(t)}, \dots,p_t^{(t)}, s_t)$. Note that $\filter_t\subset\filterg_t$ and thus $\E(\gamma_t|\filter_t)=\E(\E(\gamma_t|\filterg_t)|\filter_t)$. We need to prove the following technical lemma.
\begin{lemma}\label{lem:g}
\begin{align*}
&\E(\gamma_t|\filterg_t) \\
&\leq \max\left\{\frac{3+\wt}{4}, \frac{1+\wt}{2}, 1-\frac{(1-\wt)(1-\pt{h^*}{t})}{2}  \right\}.
\end{align*}
\end{lemma}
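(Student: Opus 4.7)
The plan is to condition on $\filterg_t$ (which fixes the teacher's distribution $P^{(t)}_t$ and the indicator $s_t \in \{1,2\}$ of which alternative in Lemma~\ref{lem:bipolar} is active) and bound $\E(\gamma_t \mid \filterg_t)$ separately in each case. Case~1 is immediate: the algorithm picks $x_t$ from a polytope $\poly$ with $\ppp{\poly} = 1/2$, so $\wph_t = 1/2$ regardless of the revealed $y_t = \sgn(h^*(x_t))$, and hence $\gamma_t = (1+\wt)/2$ deterministically, giving the second term in the max.

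Case~2 is the substantive part. Here $x_t$ is sampled uniformly from $\poly \cup \poly'$ for neighboring polytopes with $p = \ppp{\poly} > 1/2 > q = \ppp{\poly'}$. The key geometric fact I would exploit is that two neighboring polytopes differ in the sign of exactly one hypothesis, call it $h^j$, and every other hypothesis agrees on $\poly$ and $\poly'$. This yields the crucial identity $p - q = \pt{h^j}{t}$. I then split on whether $h^j$ coincides with $h^*$. In Subcase 2a ($h^j = h^*$), WLOG $h^*(\poly)=+$ and $h^*(\poly')=-$, so $y_t$ depends on the sampled polytope and a direct calculation yields
\[
  \E(\gamma_t \mid \filterg_t) = \tfrac{1}{2}[(1-p)\wt + p] + \tfrac{1}{2}[q\wt + (1-q)] = 1 - \tfrac{(1-\wt)(1-\pt{h^*}{t})}{2},
\]
matching the third term exactly. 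In Subcase 2b ($h^j \ne h^*$), $h^*$ assigns the same label to both $\poly$ and $\poly'$, so $y_t$ is determined by this common label, and depending on whether it is $+$ or $-$ one obtains $\E(\gamma_t \mid \filterg_t) = \wt + (p+q)(1-\wt)/2$ or $1 - (p+q)(1-\wt)/2$, respectively.

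The main obstacle I anticipate is Subcase 2b. The naive bound using only $q < 1/2$ (hence $p < 1/2 + \pt{h^j}{t}$) gives $\E(\gamma_t \mid \filterg_t) \leq (1+\wt)/2 + (1-\wt)\pt{h^j}{t}/2$, which can exceed $(3+\wt)/4$ whenever $\pt{h^j}{t} > 1/2$, so a second inequality is needed for that regime. The fix is to also use $p+q \leq 2 - \pt{h^j}{t}$ (coming from $p \leq 1$, $q \geq 0$); combining both yields $p+q \leq \min(1+\pt{h^j}{t},\,2-\pt{h^j}{t}) \leq 3/2$ in the $(+,+)$ sub-subcase and symmetrically $p+q \geq \max(1-\pt{h^j}{t},\,\pt{h^j}{t}) \geq 1/2$ in the $(-,-)$ sub-subcase. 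Plugging in these extremes gives the first bound $(3+\wt)/4$ uniformly in $\pt{h^j}{t} \in (0,1]$, closing the proof. The delicate piece is precisely this joint juggling of the constraints $p>1/2>q$, $p - q = \pt{h^j}{t}$, and $0 \leq q \leq p \leq 1$ to control $p+q$ from both sides across the whole range of $\pt{h^j}{t}$.
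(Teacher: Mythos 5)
Your proof is correct and follows essentially the same route as the paper: condition on $s_t$, get $(1+\wt)/2$ exactly in the balanced case, and in the bipolar case split according to how $h^*$ labels the two neighboring polytopes, using the common-face identity $\ppp{\poly}-\ppp{\poly'}=\pt{h^*}{t}$ to obtain the third term and the constraints $\ppp{\poly'}<\tfrac12<\ppp{\poly}\leq 1$ to obtain $\tfrac{3+\wt}{4}$. The only differences are cosmetic: you rule out the orientation $h^*(\poly)=-$, $h^*(\poly')=+$ (which the paper bounds separately by $\tfrac{1+\wt}{2}$) via the single-differing-hypothesis fact, and your two-sided juggling of $p+q$ through $\pt{h^j}{t}$ is unnecessary, since $p\leq 1$, $q<\tfrac12$ already give $p+q<\tfrac32$ and $p>\tfrac12$, $q\geq 0$ give $p+q>\tfrac12$.
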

\begin{proof}
 Let us first condition on $s_t=1$. Then, \Alggbs chooses an $x_t\in\poly_t$ in which case $\ppp{\poly_t}=1/2$ and results in $\gamma_t^+ = \gamma_t^- = (\wt+1)/2$. Hence, given $s_t=1$, we have
\begin{equation}
\E(\gamma_t|\filterg_t) = (\wt+1)/2.
\end{equation}

The conditioning on $s_t=2$ is a little bit more elaborate. Recall that in this case \Alggbs randomly chooses one of $\poly$ and $\poly'$. Note that $\ppp{\poly}>1/2$ and $\ppp{\poly'}<1/2$.  Now we encounter 4 possibilities:
\begin{enumerate}
\item $h^*(\poly)=h^*(\poly')=+$: condition on $s_t=2$ we have
\begin{align}
\E(\gamma_t|\filterg_t) &= \frac{\gamma_t^+ + \gamma_t^-}{2}\nonumber\\
&\leq \frac{1 +  (1-\ppp{\poly'})\wt+\ppp{\poly'}}{2}\nonumber\\
&\leq \frac{3+\wt}{4}\label{firs}
\end{align}
where we used the fact that $\gamma_t^+\leq1$ and $(1-\ppp{\poly'})\wt+\ppp{\poly'}$ is an increasing function of
$\ppp{\poly'}$ and that $\ppp{\poly'}<1/2$.

\item $h^*(\poly)=h^*(\poly')=-$: similar argument as above shows that
\begin{equation*}
\E(\gamma_t|\filterg_t)\leq \frac{3+\wt}{4}.
\end{equation*}
\item $h^*(\poly)=-, h^*(\poly')=+$: In this case we have
\begin{align}
\E(\gamma_t|\filterg_t) &= \frac{\gamma_t^+ + \gamma_t^-}{2}\nonumber\\
&=\frac{\ppp{\poly}\wt+1-\ppp{\poly}+  (1-\ppp{\poly'})\wt+\ppp{\poly'}}{2}\nonumber\\
&= 1-\frac{1-\wt}{2}(1+\ppp{\poly}-\ppp{\poly'})\nonumber\\
&\leq \frac{1+\wt}{2}\label{second}
\end{align}
where we used the fact $0\leq \ppp{\poly}-\ppp{\poly'}\leq 1$.
\item $h^*(\poly)=+, h^*(\poly')=-$: since $\poly$ and $\poly'$ are neighboring polytopes, $h^*$ should be the common face. Hence, we have $\ppp{\poly}-\ppp{\poly'} = \pt{h^*}{t}$. As a result
\begin{align}
\E(\gamma_t|\filterg_t) &= \frac{\gamma_t^+ + \gamma_t^-}{2}\nonumber\\
&=\frac{(1-\ppp{\poly})\wt+\ppp{\poly}+  \ppp{\poly'}\wt+1-\ppp{\poly'}}{2}\nonumber\\
&= \frac{1+\ppp{\poly}-\ppp{\poly'}+\wt(1- \ppp{\poly}+\ppp{\poly'})}{2}\nonumber\\
&\leq 1-\frac{(1-\wt)(1-\pt{h^*}{t})}{2}.\label{third}
\end{align}
\end{enumerate}
By combining \eqref{firs}, \eqref{second} and \eqref{third} we prove the lemma.
\end{proof}

Lemma~\ref{lem:g} readily implies that
\begin{align*}
\E\left(\frac{\BZt_{t+1}}{\BZt_{t}}|\filter_t \right)&= \frac{\E(\gamma_t|\filter_t)-\pt{h^*}{t}}{1-\pt{h^*}{t}}\nonumber\\
 &\leq \frac{3+\wt}{4}.
\end{align*}
Hence,
\begin{align*}
\E(\BZt_t) &=  \frac{1-P_0(h^*)}{P_0(h^*)} \left(1-\frac{1-\wt}{4}\right)^t\\
&\leq \frac{1-P_0(h^*)}{P_0(h^*)} \exp(-t\cdot(1-\wt)/4)
\end{align*}
This finishes the proof of Theorem~\ref{thm:RGTP}. Now, to finish the proof of Theorem~\ref{thm:stronger}, we just set $\delta$ to $1/2$ and use the probabilistic argument mentioned in the beginning of the proof, resulting in an upper bound on $\OPT$. Since we use the logistic likelihood function, $w_o$ can be bounded by $\frac{1}{2}$. Theorem~\ref{thm:UGTP} follows.
 \end{proof}

\end{document}